\pdfoutput=1
\documentclass[letterpaper, 10 pt, journal, twoside]{IEEEtran}

\usepackage{graphics} 
\usepackage{subfig} 
\usepackage{wrapfig}

\usepackage{amsthm} 
\usepackage{amsmath,amssymb}
\usepackage{graphicx}
\usepackage{tabularx}
\usepackage{multirow, makecell}
\usepackage{diagbox}
\usepackage{slashbox}
\usepackage{rotating}
\usepackage{cite}

\usepackage{url}
\usepackage{bm}
\usepackage[table]{xcolor}
\usepackage{hyperref}
\usepackage{siunitx} 
\usepackage{booktabs}
\usepackage{cleveref}
\usepackage{mathtools} 
\usepackage{upgreek} 
\usepackage{lipsum} 

\usepackage{enumitem} 

\usepackage[ruled,vlined]{algorithm2e}

\usepackage{pifont} 
\newcommand{\xmark}{\ding{55}}%

\usepackage[nolist, nohyperlinks]{acronym}
\acrodef{MPC}[MPC]{Model Predictive Control}
\acrodef{QP}[QP]{Quadratic Program}
\acrodef{CBF}[CBF]{Control Barrier Function}

\newcommand{\vx}{{\boldsymbol x}}
\newcommand{\vu}{{\boldsymbol u}}
\newcommand{\vq}{{\boldsymbol q}}

\newcommand{\vp}{{\boldsymbol p}} 
\newcommand{\vv}{{\boldsymbol v}} 

\newcommand{\lieder}{L}
\newcommand{\StateSpace}{\mathcal{X}}
\newcommand{\ControlSpace}{\mathcal{U}}
\newcommand{\RealSpace}{\mathbb{R}}

\newcommand{\Rn}{\mathbb{R}^{n}}
\newcommand{\Rm}{\mathbb{R}^{m}}
\newcommand{\Rnm}{\mathbb{R}^{n \times m}}

\newcommand{\calC}{\mathcal{C}} 
\newcommand{\calK}{\mathcal{K}} 
\newcommand{\calR}{\mathcal{R}} 

\newcommand{\calP}{\mathcal{P}} 
\newcommand{\calW}{\mathcal{W}} 
\newcommand{\calB}{\mathcal{B}}
\newcommand{\calO}{\mathcal{O}} 
\newcommand{\calS}{\mathcal{S}} 
\newcommand{\calN}{\mathcal{N}} 
\newcommand{\calA}{\mathcal{A}} 

\newcommand{\calJ}{\mathcal{J}} 
\newcommand{\calH}{\mathcal{H}}
\newcommand{\calU}{\mathcal{U}}
\newcommand{\calF}{\mathcal{F}}
\newcommand{\va}{{\boldsymbol a}}

\DeclareMathOperator*{\argmax}{arg\,max}
\DeclareMathOperator*{\argmin}{arg\,min}

\newtheorem{definition}{Definition}
\newtheorem{theorem}{Theorem}
\newtheorem{lemma}{Lemma}

\theoremstyle{definition}
\newtheorem{remark}{Remark} 

\theoremstyle{definition}

\theoremstyle{definition}

\theoremstyle{definition}

\theoremstyle{definition}
\newtheorem{assumption}{Assumption}

\DeclareCaptionFont{mysize}{\fontsize{8}{9.6}\selectfont}
\title{\texttt{SEAMLiS}: Visibility-Aware Safety for Perception-Limited Multi-Robot Exploration}

\author{Taekyung Kim$^{1}$, Rahul H Kumar$^{1}$, Aswin D. Menon$^{2}$, Tzu-Hsiang Lin$^{3}$, Dimitra Panagou$^{1,2}$
\thanks{$^{1}$Department of Robotics, $^{2}$Department of Aerospace Engineering, $^{3}$Department of Mechanical Engineering, University of Michigan, Ann Arbor, MI, 48109, USA {\tt\footnotesize \{taekyung, rahulhk, admenon, szlin, dpanagou\}@umich.edu} }%
}

\begin{document}
\maketitle

\begin{abstract}
Autonomous exploration in unknown environments is typically driven by informative frontiers, viewpoints, or trajectories, while local safety controllers avoid obstacles represented in the current map. Under finite sensing range and limited field of view, this separation can be unsafe: an exploration stack may plan optimistically through unobserved space and steer the sensor toward information gain rather than along the direction of motion, causing hidden obstacles to be detected too late for bounded-actuation avoidance. This paper presents \texttt{SEAMLiS} (\textbf{S}afe \textbf{E}xploration for \textbf{A}utonomous \textbf{M}ulti-Robot Systems Under \textbf{Li}mited \textbf{S}ensing), a modular execution-layer safety framework for decentralized multi-robot exploration. \texttt{SEAMLiS} preserves the upstream exploration stack, including the goal allocator and local planner, and enforces safety at the execution layer through perception-aware attitude and positional filters. A \texttt{gatekeeper}-based attitude filter switches between a visibility-promoting yaw policy and a velocity-tracking backup policy to preserve visibility of the critical known-free/unknown boundary with sufficient braking margin. A \ac{CBF}-based positional filter then avoids known obstacles, newly detected obstacles, and other robots. We provide sufficient collision-avoidance conditions and validate the framework in randomized simulation, Isaac Sim, and Crazyflie hardware experiments. Results show collision-free exploration across tested single- and multi-robot settings while retaining much of the efficiency of visibility-promoting yaw control. \href{https://github.com/tkkim-robot/seamlis}{\textcolor{red}{[Code]}}  \href{https://www.taekyung.me/seamlis}{\textcolor{red}{[Project Page]}}\footnote{Project page: \href{https://www.taekyung.me/seamlis}{https://www.taekyung.me/seamlis}} \href{https://youtu.be/0EzbbFIb2fY}{\textcolor{red}{[Video]}} 
\end{abstract}

\section{INTRODUCTION}

Exploration and mapping of unknown environments are fundamental capabilities for autonomous robots operating in unstructured workspaces. The objective is to incrementally construct a map while navigating safely, typically through a hierarchical autonomy stack consisting of localization and mapping, global exploration-goal selection, local planning, and feedback control. A seminal work~\cite{yamauchi_frontierbased_1997} introduced \emph{frontiers}, defined as boundaries between known-free and unknown regions, and established frontier-driven exploration as a canonical paradigm. Since then, extensive literature has sought to reduce exploration time by improving frontier selection and viewpoint planning, including heuristic frontier-selection methods~\cite{dai_fast_2020,kim_multirobot_2024,xu_heuristicbased_2024}, next-best-view strategies~\cite{lindqvist_treebased_2024}, and learning-assisted and learning-based approaches~\cite{hu_voronoibased_2020,chaplot_learning_2020,li_deep_2020,ramakrishnan_poni_2022,li_learningaugmented_2023,cao_deep_2024}.

\begin{figure}[t]
    \centering
    \includegraphics[width=0.99\linewidth]{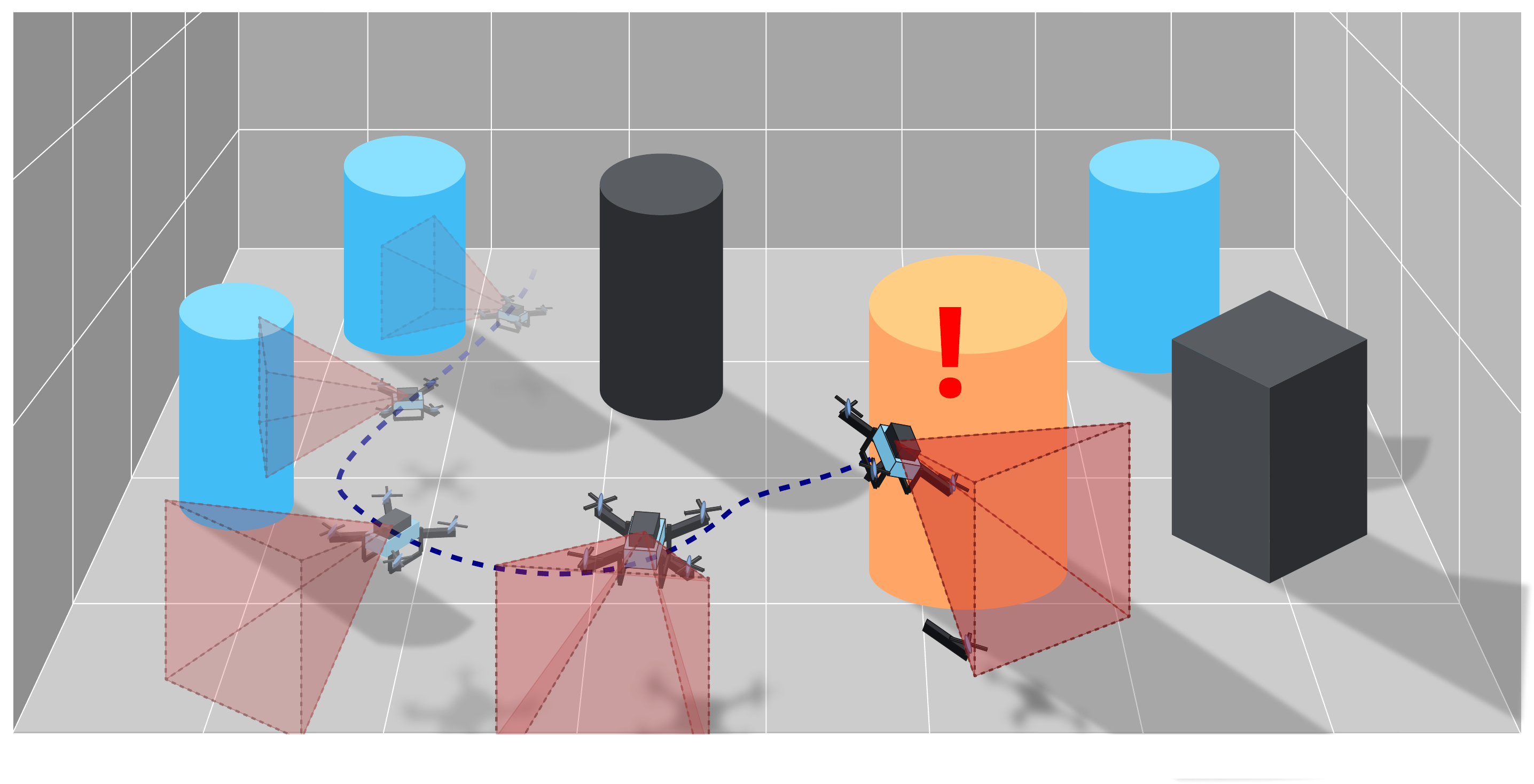}
    \caption{Motivating failure example for perception-limited robot exploration. A robot exploring an unknown environment steers its finite-FoV sensor toward information-rich regions, while its translational motion approaches an unseen obstacle. Because the direction of motion is not kept sufficiently visible, the hidden obstacle may be detected only after the remaining stopping distance has become insufficient, leading to a collision.}
    \label{fig:drone_problem}
\end{figure}

\begin{figure*}[t]
    \centering
    \includegraphics[width=0.92\textwidth]{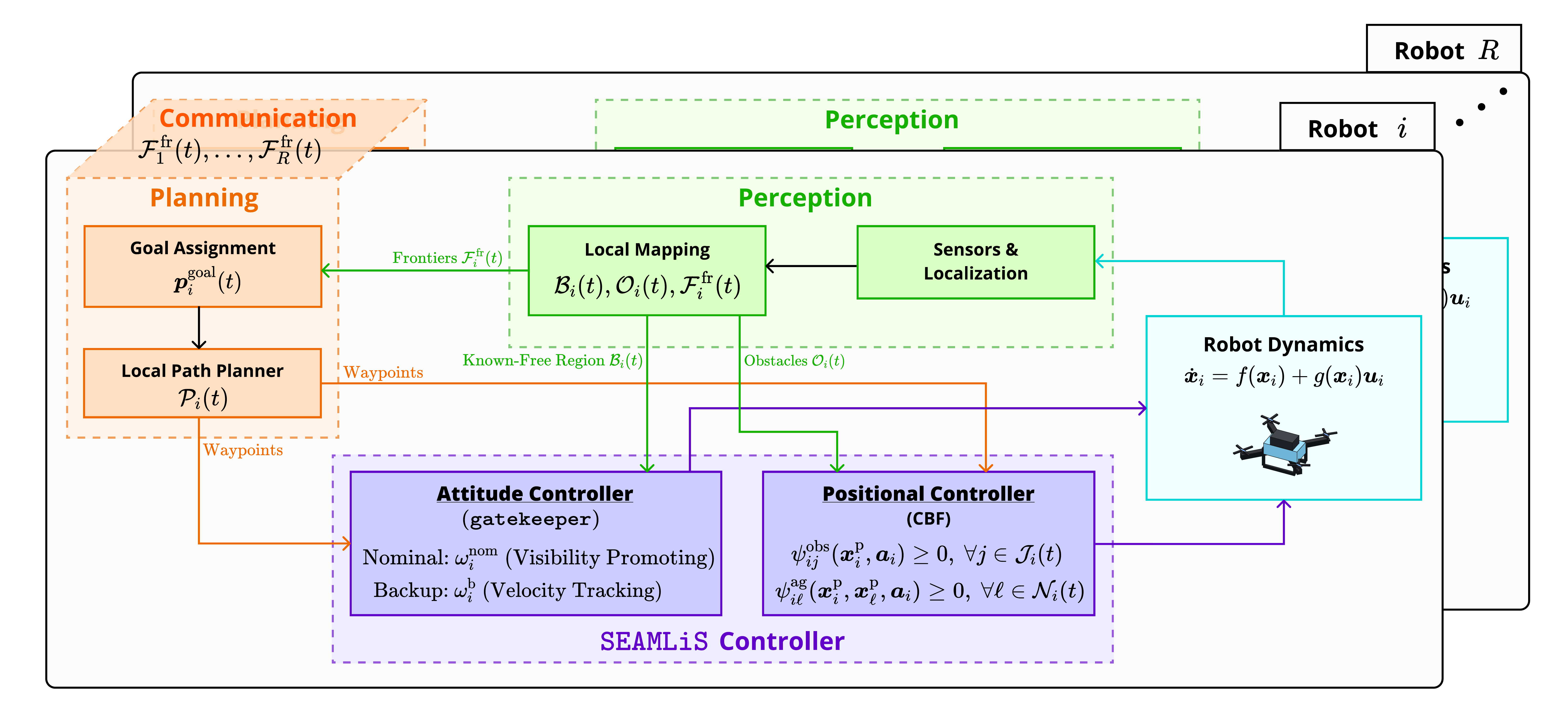}
    \caption{
    \texttt{SEAMLiS} integration in a decentralized exploration stack for $R$ robots.
    For robot $i$, the planning module assigns an exploration goal $\vp^{\textup{goal}}_{i} (t)$ from local or communicated frontier information $\calF^{\textup{fr}}(t) = \{ \calF^{\textup{fr}}_{1}(t), \ldots, \calF^{\textup{fr}}_{R}(t)\}$ and generates a waypoint path $\mathcal{P}_i(t)$. The perception module updates the local map, including the accumulated known-free set $\mathcal{B}_i(t)$, the local obstacle set $\mathcal{O}_i(t)$, and the frontier set $\mathcal{F}^{\textup{fr}}_i(t)$. \texttt{SEAMLiS} is placed at the execution layer: the attitude controller uses \texttt{gatekeeper} to filter a nominal visibility-promoting yaw command with a velocity-tracking backup policy, while the positional controller enforces CBF constraints for known obstacles, newly detected obstacles, and neighboring robots. The resulting control input is applied to the robot dynamics, and sensing and localization close the feedback loop.
    }
    \label{fig:seamlis_arch}
\end{figure*}

Multi-robot exploration can further improve coverage speed by distributing the mapping task across multiple platforms. Practical deployments, however, often face severe communication constraints: sharing full occupancy grids or dense maps can be bandwidth-intensive and brittle under packet loss. Consequently, many multi-robot systems are designed to be \emph{decentralized}, where robots share only limited information such as poses, topological summaries, or selected frontier candidates rather than complete maps~\cite{atanasov_decentralized_2015, zhou_racer_2023}. In this setting, a robot may be assigned a goal that lies on the boundary of, or even outside, its locally sensed region. The robot must then navigate through partially known space while relying on onboard sensing with finite range and limited field of view~(FoV).

A critical, yet often under-emphasized, issue is that efficiency-oriented exploration stacks can be unsafe under limited sensing. A common optimistic modeling choice is to treat unobserved space as collision-free for planning and control~\cite{gao_online_2016,zhong_informationdriven_2022,bena_safetyaware_2023}. This assumption is useful for maintaining exploration progress, but it becomes problematic when (i) the robot has a restricted FoV and finite sensing range, and (ii) the robot's yaw, or sensor heading, is controlled to maximize information gain rather than to observe the direction of motion. In holonomic platforms, or quadrotor-like models with decoupled yaw, a visibility-promoting yaw policy can induce \emph{sideways motion}: the robot translates along a direction that is not covered by its current FoV. Hidden obstacles along the direction of travel may then be detected only when the remaining stopping distance is insufficient. This failure mode can persist even when a local safety-critical controller, such as a \acf{CBF}-based \acf{QP}, is used for obstacle avoidance, because the safety filter may become infeasible once an obstacle is detected too late under bounded actuation~\cite{kim_visibilityaware_2025}. Fig.~\ref{fig:drone_problem} illustrates this motivating failure mode.

This paper addresses this gap by developing \texttt{SEAMLiS} (\textbf{S}afe \textbf{E}xploration for \textbf{A}utonomous \textbf{M}ulti-Robot Systems Under \textbf{Li}mited \textbf{S}ensing), a modular safety layer for decentralized multi-robot exploration. Rather than replacing the upstream goal assignment or planning modules, \texttt{SEAMLiS} operates at the \emph{execution layer}. It is designed to be plug-and-play with respect to both the global goal allocator and the local planner; any algorithm that outputs per-robot goals, including frontier-based methods~\cite{yamauchi_frontierbased_1997}, decentralized optimization-based methods, or learning-based exploration policies, can be used. The key idea is to ensure that each robot observes the critical boundary between its known-free region and the unknown region early enough to preserve the feasibility of collision avoidance under bounded acceleration. This perspective is aligned with recent analyses of perception-limited safety, which show that safety guarantees can fail when planners or controllers are agnostic to FoV and sensing-range constraints~\cite{firoozi_oampc_2025, kim_visibilityaware_2025}. The resulting modular architecture is illustrated in Fig.~\ref{fig:seamlis_arch}.

Accordingly, the central question in this paper \textbf{\emph{is not}} how to choose the most informative frontier, viewpoint, or trajectory, but \textbf{\emph{how to make an arbitrary upstream exploration stack safe}} when sensing is finite-range and FoV-limited, and when sensor heading is decoupled from translational motion.

The main contributions of this paper are:
\begin{itemize}
    \item We formalize decentralized multi-robot exploration under limited sensing and identify a safety gap overlooked by efficiency-oriented exploration stacks: optimistic planning combined with visibility-maximizing yaw control can delay hidden-obstacle detection until safe avoidance is infeasible.

    \item We propose \texttt{SEAMLiS}, a modular execution-layer safety framework that augments arbitrary, possibly asynchronous, multi-robot goal assignment and planning modules. This design preserves the scalability and efficiency of existing decentralized exploration pipelines while adding perception-aware safety enforcement under limited communication.

    \item We design a visibility-aware attitude safety filter using a \texttt{gatekeeper}-style mechanism~\cite{agrawal_gatekeeper_2024}. The filter permits nominal information-greedy yaw when certified safe and switches to a velocity-tracking backup policy when needed, ensuring that motion-relevant unknown boundaries remain observable with sufficient braking margin.

    \item We integrate the attitude safety filter with a \ac{CBF}-based positional controller for known obstacles, newly detected obstacles, and neighboring robots. We provide sufficient conditions under which the combined controller guarantees collision avoidance for all time despite initially unknown obstacles.

    \item We validate \texttt{SEAMLiS} in randomized simulation, Isaac Sim, and two-Crazyflie hardware experiments, showing that it eliminates the hidden-obstacle collisions observed in baseline attitude policies while retaining much of the efficiency of visibility-promoting exploration.
\end{itemize}

\section{RELATED WORK}
\label{sec:related_work}

Autonomous exploration has been studied from several complementary perspectives, including frontier and next-best-view~(NBV) planning, optimization-based multi-robot coordination, learning-based exploration, and safety-aware navigation under limited perception. The distinction most relevant to this work is not only how exploration goals are generated, but also whether the method is decentralized, whether finite range and limited field of view are explicitly modeled, and whether formal safety guarantees are provided. \autoref{tab:related_exploration} summarizes representative methods along these axes.

\begin{table*}[t]
\centering
\caption{Comparison with representative exploration and perception-limited safety methods. Here, ``Limited Sensing" indicates that the method explicitly models finite sensing range, limited field of view (FoV), or constrained egocentric observations; it does not by itself imply a formal safety guarantee under such sensing limits. Abbreviations: Dec. = decentralized execution; \(\triangle\) = partial or setting-dependent. The symbol $\dagger$ indicates that the safety guarantee is provided outside the decentralized multi-robot exploration setting.}
\label{tab:related_exploration}
\footnotesize
\setlength{\tabcolsep}{3.0pt}
\renewcommand{\arraystretch}{1.12}
\resizebox{\textwidth}{!}{%
\begin{tabular}{lcccccc}
\toprule
\textbf{Method} &
\textbf{Setting} &
\textbf{Dec.} &
\textbf{Limited Sensing} &
\textbf{Safety Guarantee} &
\textbf{Main Idea} &
\textbf{Gap to Ours} \\
\midrule
Frontier-based exploration~\cite{yamauchi_frontierbased_1997}
& Single & \(\triangle\) & \xmark & \xmark
& Frontier goals
& Known-map safety only \\

NBV / informative path planning~\cite{bircher_receding_2016,schmid_efficient_2020}
& Single & N/A & \(\triangle\) & \xmark
& Information gain
& Known-map safety only \\

Hierarchical frontier / NBV planners~\cite{dai_fast_2020,zhou_fuel_2021,cao_tare_2021,lindqvist_treebased_2024,liu_flare_2025}
& Single & N/A & \(\triangle\) & \xmark
& Frontier / NBV hierarchy
& No FoV guarantee \\

CoScan~\cite{dong_multirobot_2019}
& Multi & \xmark & \(\triangle\) & \xmark
& OMT-based task assignment
& No hidden-obstacle guarantee \\

Decentralized active information acquisition~\cite{atanasov_decentralized_2015,tzes_graph_2023}
& Multi & \checkmark & \(\triangle\) & \xmark
& Distributed information
& Estimation-focused \\

RACER~\cite{zhou_racer_2023}
& Multi & \checkmark & \(\triangle\) & \xmark
& Space decomposition
& No hidden-obstacle guarantee \\

Graph-based multi-robot exploration~\cite{varadharajan_multirobot_2025}
& Multi & \checkmark & \(\triangle\) & \xmark
& Local-global graphs
& Planner-level safety \\

Heterogeneous multi-robot exploration~\cite{zheng_aage_2025,yang_heha_2025,li_decentralized_2026}
& Hetero. Multi & \(\triangle\) & \(\triangle\) & \xmark
& Capability-aware planning
& Empirical safety \\

NeuralCoMapping~\cite{ye_multirobot_2022}
& Multi & \xmark & \checkmark & \xmark
& Graph matching
& No safety layer \\

Neural mapping/navigation~\cite{chaplot_learning_2020,ramakrishnan_poni_2022,li_learningaugmented_2023,cao_deep_2024}
& Single & N/A & \checkmark & \xmark
& Learned exploration
& Empirical safety \\

Neural multi-agent exploration~\cite{yu_learning_2022,yang_active_2024}
& Multi & \(\triangle\) & \checkmark & \xmark
& Neural maps / graphs
& Empirical safety \\

Decentralized RL exploration~\cite{calzolari_reinforcement_2025,chiun_marvel_2025}
& Multi & \checkmark & \checkmark & \xmark
& Learned policies
& Empirical safety \\

Perception-aware safe navigation~\cite{bena_safetyaware_2023,kim_visibilityaware_2025,firoozi_oampc_2025}
& Single & N/A & \checkmark & \checkmark$^\dagger$
& Visibility / occlusion safety
& Not multi-robot exploration \\

\textbf{Ours: \texttt{SEAMLiS}}
& \textbf{Multi} & \textbf{\checkmark} & \textbf{\checkmark} & \textbf{\checkmark}
& \textbf{\texttt{gatekeeper} + CBF}
& \textbf{Execution-layer safety} \\
\bottomrule
\end{tabular}%
}
\end{table*}

\subsection{Frontier, Next-Best-View, and Hierarchical Exploration}

Frontier-based exploration remains a standard abstraction for autonomous mapping, where the robot repeatedly selects goals on the boundary between known-free and unknown space~\cite{yamauchi_frontierbased_1997}. Subsequent work has improved exploration efficiency by refining frontier scoring, next-best-view selection, and trajectory generation. Receding-horizon next-best-view methods sample candidate viewpoints or branches and select paths with high expected information gain~\cite{bircher_receding_2016}, while sampling-based informative path planning optimizes longer-horizon information gain and can maintain a global candidate tree online~\cite{schmid_efficient_2020}. Recent exploration systems further improve scalability through incremental frontier structures, hierarchical local--global planning, next-best-trajectory optimization, and unknown-region-guided trajectory generation~\cite{dai_fast_2020,zhou_fuel_2021,cao_tare_2021,lindqvist_treebased_2024,liu_flare_2025}. These methods are effective for reducing exploration time, but safety is typically enforced only with respect to the currently known map or by a downstream local planner. Consequently, they do not by themselves address the case in which finite range, limited FoV, and optimistic planning allow a robot to move toward hidden obstacles that are detected too late.

\subsection{Optimization-Based and Decentralized Multi-Robot Exploration}

Multi-robot exploration can reduce mapping time by distributing sensing and planning across the team. CoScan~\cite{dong_multirobot_2019} coordinates multiple robots for dense scene reconstruction by extracting task views, assigning them through an Optimal Mass Transport (OMT)-based objective, and computing per-robot scanning paths; however, the original system relies on centralized reconstruction and planning. Decentralized active information acquisition instead formulates multi-robot sensing as distributed information gathering under graph-structured communication~\cite{atanasov_decentralized_2015,tzes_graph_2023}. For aerial robots, RACER~\cite{zhou_racer_2023} introduced a fully decentralized multi-UAV exploration system that uses online space decomposition, limited asynchronous communication, workload balancing, and hierarchical planning.

Recent distributed multi-robot exploration systems extend this direction to large-scale and unstructured environments.
For example, the multi-robot exploration planner in~\cite{varadharajan_multirobot_2025} maintains local and global exploration graphs that are synchronized among robots within communication range, enabling distributed exploration in subterranean and Mars-like environments.
Heterogeneous exploration systems further exploit complementary platform capabilities.
AAGE uses an aerial robot to provide a bird's-eye-view prior for ground-robot exploration~\cite{zheng_aage_2025}, while recent heterogeneous planners explicitly account for different traversal, sensing, and observation capabilities during task allocation and path planning~\cite{yang_heha_2025,li_decentralized_2026}.
These systems improve coverage efficiency and adaptability in realistic environments, but their collision avoidance remains primarily planner-level or empirical rather than a formal finite-FoV hidden-obstacle safety guarantee.

Learning-assisted assignment has also been used for multi-robot active mapping; NeuralCoMapping formulates robot-frontier assignment as a neural bipartite graph matching problem to improve efficiency and map completeness~\cite{ye_multirobot_2022}. These methods address coordination and exploration efficiency, but they generally do not provide formal collision-avoidance guarantees for initially unknown obstacles under bounded actuation and limited FoV. \texttt{SEAMLiS} is therefore complementary: it can sit below such goal-allocation and planning modules as an execution-layer safety mechanism.

\subsection{Learning-Based Exploration}

Learning-based exploration methods exploit learned spatial representations, priors, or policies to improve efficiency in visually rich environments. Active Neural SLAM uses a modular hierarchy with learned SLAM, learned global and local policies, and analytical planning to avoid the sample complexity of purely end-to-end exploration~\cite{chaplot_learning_2020}. Related single-agent neural mapping and navigation methods learn object-goal potentials, model-based spatial priors, or large-scale exploration policies~\cite{ramakrishnan_poni_2022,li_learningaugmented_2023,cao_deep_2024}. Multi-agent extensions learn cooperative policies from visual or map observations. MAANS extends Active Neural SLAM to cooperative visual exploration through a transformer-based multi-agent spatial planner~\cite{yu_learning_2022}, while Active Neural Topological Mapping for Multi-Agent Exploration constructs compact topological graphs and uses a hierarchical graph neural planner for goal selection~\cite{yang_active_2024}. More recent learning-based multi-robot exploration methods further study decentralized communication, density-based frontier selection, constrained-FoV viewpoint decisions, multi-task pretraining, and hybrid learning-planning architectures~\cite{calzolari_reinforcement_2025,chiun_marvel_2025,zhu_taskexp_2025,liu_vorlexplore_2026}. Although these approaches can improve exploration efficiency, scalability, and generalization, their safety properties are usually empirical and depend on the simulator, training distribution, or a separate collision checker. They do not certify recursive feasibility of collision avoidance when hidden obstacles become visible only after the remaining braking distance is insufficient.

\subsection{Safety-Aware Navigation under Limited Perception}

A separate line of work studies safety under limited sensing, occlusion, and visibility constraints. Safety-aware perception methods use CBF-based reasoning to choose sensor orientations or viewpoints that reduce collision risk~\cite{bena_safetyaware_2023}. Visibility-aware planning combines sampling-based planning with barrier-style constraints to account for limited sensing in unknown environments~\cite{kim_visibilityaware_2025}. Occlusion-aware MPC reasons about unseen dynamic obstacles through reachable sets and incorporates occlusion-induced risk into real-time planning~\cite{firoozi_oampc_2025}. These methods show that perception constraints are not merely sensing details but can directly determine whether a safety controller remains feasible. However, they are primarily developed for single-robot navigation or occlusion-aware planning rather than decentralized multi-robot exploration with arbitrary upstream goal assignment. In contrast, \texttt{SEAMLiS} targets the execution layer of multi-robot exploration: the attitude filter maintains visibility of the critical known-free/unknown boundary, while the positional CBF/HOCBF filter handles known obstacles, newly detected hidden obstacles, and inter-agent separation.

\section{PRELIMINARIES}

\subsection{Notation}

Let $\RealSpace$ denote the set of real numbers. For a vector $\va \in \RealSpace^n$, $\|\va\|$ denotes the Euclidean norm. For angles, $\operatorname{wrap}(\cdot)$ denotes wrapping onto $(-\pi,\pi]$. For a set $\calA \subset \RealSpace^d$, $\partial \calA$ denotes its boundary and $\overline{\calA}$ denotes its closure, i.e., $\overline{\calA} \coloneqq \calA \cup \partial \calA$.

\subsection{Robot Dynamics with Decoupled Yaw}

We consider $R$ robots indexed by $i \in \{1,\ldots,R\}$. Each robot is modeled by a double integrator in position and a single integrator in yaw, which is a standard simplified model for quadrotor translational motion with decoupled heading control:
\begin{equation}
\dot{\vp}_i = \vv_i,\qquad 
\dot{\vv}_i = \va_i,\qquad
\dot{\theta}_i = \omega_i,
\label{eq:dyn}
\end{equation}
where $\vp_i \in \RealSpace^2$ is position, $\vv_i \in \RealSpace^2$ is velocity, $\theta_i \in \RealSpace$ is yaw, and the control inputs are acceleration $\va_i \in \RealSpace^2$ and yaw-rate $\omega_i \in \RealSpace$. We assume bounded actuation:
\begin{equation}
\|\vv_i\| \le v_{\max},\qquad \|\va_i\| \le a_{\max},\qquad |\omega_i| \le \omega_{\max}.
\label{eq:bounds}
\end{equation}
The worst-case braking distance is defined as
\begin{equation}
d_{\textup{br}} \coloneqq \frac{v_{\max}^2}{2 a_{\max}},
\label{eq:braking_distance}
\end{equation}
which upper-bounds the distance required to stop from any admissible speed under maximum deceleration.

\subsection{Limited Sensing Model}

Each robot carries a perception sensor with finite sensing range $l_{\textup{range}}$ and limited horizontal FoV $\theta_{\textup{fov}} \in (0,2\pi]$. At time $t$, robot $i$ senses the footprint
\begin{equation}\label{eq:fov_footprint}
\begin{aligned}
 \calF_{i}(t) \coloneqq 
\Big\{\vp \in \calW \mid &\ \|\vp - \vp_i(t)\| \le l_{\textup{range}}, \\
& \left|\operatorname{wrap}\big(\angle(\vp - \vp_i(t)) - \theta_i(t)\big)\right|
\le \frac{\theta_{\textup{fov}}}{2}, \\
& \operatorname{LoS}(\vp_i(t),\vp)=1
\Big\},
\end{aligned}
\end{equation}
where $\calW \subset \RealSpace^2$ denotes the bounded workspace and $\angle(\cdot)$ returns the polar angle of a nonzero vector in $\RealSpace^2$. The line-of-sight indicator $\operatorname{LoS}(\vp,\vq)$ equals $1$ if the line segment connecting $\vp$ and $\vq$ is not occluded by an obstacle, and equals $0$ otherwise.

\section{PROBLEM FORMULATION}

\subsection{Environment, Map Representation, and Frontiers}

Let the true static obstacle set be $\calO^\star \subset \calW$, and let the true free space be $\calS \coloneqq \calW \setminus \calO^\star$. We decompose $\calO^\star$ into the initially known obstacle set $\calO_0$ and the initially unknown, or hidden, obstacle set $\calH$:
\begin{equation}
\calO^\star = \calO_0 \cup \calH,\qquad \calO_0 \cap \calH = \emptyset.
\end{equation}
Each robot $i$ maintains a local map, such as an occupancy grid or Euclidean signed distance map, constructed from its onboard sensor stream.

\begin{assumption}[State estimation and local perception]\label{ass:slam}
Each robot has access to its own pose and velocity estimates, and the estimation and local mapping errors are bounded by the conservative margins used in the safety constraints. Within the sensed footprint $\calF_i(t)$, occupied and free regions are classified correctly up to these margins.
\end{assumption}

Such conservative margins can be supplied by certifiably-correct mapping pipelines that deflate reported free space under odometry drift~\cite{agrawal_certifiablycorrect_2025}.

Under \autoref{ass:slam}, the local map of robot $i$ partitions $\calW$ into a known-free region $\calB_i(t)$, a known-occupied region $\calO_i(t)$, and an unknown region $\calU_i(t)$. We model the known-free region as the accumulated sensed footprint excluding occupied cells:
\begin{equation}
\calB_i(t) \coloneqq \left(\bigcup_{\tau\in[t_0,t]} \calF_i(\tau)\right) \setminus \calO_i(t),
\label{eq:known_free}
\end{equation}
and the known obstacle set as
\begin{equation}
\calO_i(t) \coloneqq \calO_0 \cup \hat{\calO}_i(t),
\end{equation}
where $\hat{\calO}_i(t)$ denotes the subset of hidden obstacles in $\calH$ that have been detected by robot $i$ up to time $t$. The unknown region is then
\begin{equation}
\calU_i(t) \coloneqq \calW \setminus \left(\overline{\calB_i(t)} \cup \calO_i(t)\right).
\end{equation}

\begin{definition}[Frontier~\cite{yamauchi_frontierbased_1997}]\label{def:frontier}
A \emph{frontier point} for robot $i$ at time $t$ is a boundary point of the known-free region that is adjacent to the unknown region. The frontier set is defined as
\begin{equation}
\calF^{\textup{fr}}_i(t)
\coloneqq
\partial \calB_i(t) \cap \overline{\calU_i(t)} .
\label{eq:frontier}
\end{equation}
\end{definition}

\subsection{Decentralized Multi-Robot Exploration Stack}

We adopt the following decentralized exploration setting.

\begin{assumption}[Decentralized coordination]\label{ass:decentralized}
The robot team operates without a central controller. Each robot $i$ determines its own control input $\vu_i=(\va_i,\omega_i)$ using its local state and local map, together with limited information shared asynchronously by neighboring robots, such as poses, velocities estimated from pose histories, and selected frontier candidates.
\end{assumption}

The decentralized exploration stack consists of the following modules.

\textbf{(i) Goal assignment.} A possibly asynchronous multi-robot goal allocation module assigns each robot a goal $\vp^{\textup{goal}}_i(t)\in \calW$, typically chosen on or near a frontier:
\begin{equation}
\vp^{\textup{goal}}_i(t)\in \calF^{\textup{fr}}(t),
\end{equation}
where $\calF^{\textup{fr}}(t)$ is a set of frontier candidates collected from local and communicated information. We treat the goal allocator as a black box; examples include frontier-based exploration~\cite{yamauchi_frontierbased_1997}, decentralized adaptations of CoScan-style task-assignment objectives~\cite{dong_multirobot_2019}, or learning-based exploration policies.

\textbf{(ii) Local path planning.} Given $\vp^{\textup{goal}}_i(t)$ and the local map, a planner computes a waypoint path $\calP_i(t)$. To maintain progress under incomplete maps, especially when goals originate from other robots, it is common to plan optimistically by treating unknown cells as traversable~\cite{gao_online_2016,zhong_informationdriven_2022} or by assigning them a finite cost penalty~\cite{bena_safetyaware_2023}. In our implementation, we use A* for simplicity, but \texttt{SEAMLiS} does not depend on a specific planner.

\textbf{(iii) Local control.} A positional controller tracks $\calP_i(t)$, while an attitude controller outputs $\omega_i$ to regulate the sensor heading and improve sensing efficiency.

\subsection{Safety and Exploration Objective}

The exploration objective is to map the workspace by driving the union of known-free regions to cover the true free space:
\begin{equation}
\exists\, T_f < \infty:\quad
\bigcup_{i=1}^{R} \calB_i(T_f) = \calS.
\label{eq:coverage}
\end{equation}
Simultaneously, we require safety for all time:
\begin{subequations}\label{eq:safety_goal}
\begin{align}
\forall t\ge t_0,\ \forall i: \, &
\operatorname{Robot}(\vp_i(t)) \cap \calO^\star = \emptyset, \\
\forall t\ge t_0,\ \forall i\neq \ell: \, &
\operatorname{Robot}(\vp_i(t)) \cap \operatorname{Robot}(\vp_\ell(t)) = \emptyset,
\end{align}
\end{subequations}
where $\operatorname{Robot}(\vp)$ is modeled as a disk of radius $r_{\textup{rob}}$ centered at $\vp$. We aim to reduce the total exploration time while satisfying~\eqref{eq:safety_goal}. Since direct time-optimal exploration is difficult to encode at the controller level, we use a local visibility-promoting attitude objective that increases information gain per unit motion.

The key difficulty is that~\eqref{eq:safety_goal} cannot be certified solely from $\calB_i(t)$ and $\calO_i(t)$ under limited sensing: unknown obstacles in $\calH$ may lie in $\calU_i(t)$ and become detectable only when they enter $\calF_i(t)$. Therefore, safety depends on \emph{timely detection}: obstacles must be sensed early enough that the positional safety controller remains feasible under the actuation bounds~\eqref{eq:bounds}. This motivates two execution-layer safety mechanisms. First, a positional safety filter must guarantee avoidance of obstacles and robots that are already represented in the local information state. Second, an attitude safety filter must ensure that the boundary between known-free and unknown space is observed early enough to make future obstacle avoidance feasible. We next introduce the two safety-filter mechanisms underlying \texttt{SEAMLiS} and then describe how they are instantiated in the proposed framework.

\section{SAFETY-FILTER MECHANISMS}
\label{sec:safety_filter_mechanisms}

This section summarizes the two safety-filter mechanisms used by \texttt{SEAMLiS}. The first is a CBF/HOCBF-based filter, which minimally modifies a nominal control input so that the closed-loop system remains inside a controlled-invariant set defined by the already known safety constraints. The second is a \texttt{gatekeeper} algorithm, which decides whether a nominal policy can be executed for a certified duration before switching to a backup policy. These tools are introduced here at an abstract level; their \texttt{SEAMLiS}-specific instantiations are given in \autoref{sec:seamlis_method}.

\subsection{CBF-Based Safety Filtering}
\label{subsec:cbf_building_block}

Consider a continuous-time control-affine system
\begin{equation}
\dot{\vx}=f(\vx)+g(\vx)\vu,
\label{eq:sf_control_affine}
\end{equation}
where $\vx\in\StateSpace\subset\Rn$ is the state, $\vu\in\ControlSpace\subset\Rm$ is the control input, and $\ControlSpace$ denotes the admissible input set. The vector fields $f:\StateSpace\to\Rn$ and $g:\StateSpace\to\Rnm$ are assumed to be locally Lipschitz. Let the desired safe set be the zero-superlevel set
\begin{equation}
\calC \coloneqq \{\vx\in\StateSpace \mid h(\vx)\ge 0\},
\label{eq:sf_safe_set}
\end{equation}
where $h:\StateSpace\to\RealSpace$ is continuously differentiable.

\begin{definition}[\ac{CBF}~\cite{ames_control_2019}]
\label{def:sf_cbf}
The function $h$ is a \ac{CBF} for~\eqref{eq:sf_control_affine} on $\calC$ if there exists an extended class-$\calK_{\infty}$ function $\alpha$ such that
\begin{equation}
\sup_{\vu\in\ControlSpace}
\left[
\lieder_f h(\vx)+\lieder_g h(\vx)\vu
\right]
\ge -\alpha(h(\vx)),
\qquad \forall \vx\in\calC,
\label{eq:sf_cbf_condition}
\end{equation}
where $\lieder_f h$ and $\lieder_g h$ denote the Lie derivatives of $h$ along $f$ and $g$, respectively.
\end{definition}

The corresponding set of CBF-admissible inputs is
\begin{equation}
K_{\textup{cbf}}(\vx)
\coloneqq
\left\{
\vu\in\ControlSpace
\ \middle|\
\lieder_f h(\vx)+\lieder_g h(\vx)\vu
\ge -\alpha(h(\vx))
\right\}.
\label{eq:sf_kcbf} \nonumber
\end{equation}

\begin{theorem}[\cite{ames_control_2019}]
\label{thm:sf_cbf_forward_invariance}
Suppose that $h$ is a \ac{CBF} for $\calC$ and that the feedback controller $\vu(\vx)$ is locally Lipschitz on $\StateSpace$ and satisfies $\vu(\vx)\in K_{\textup{cbf}}(\vx)$ for all $\vx\in\calC$. If $\vx(t_0)\in\calC$, then $\calC$ is forward invariant for the closed-loop system~\eqref{eq:sf_control_affine}; equivalently, $\vx(t)\in\calC$ for all $t\ge t_0$.
\end{theorem}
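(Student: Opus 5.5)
The plan is the standard Nagumo-type comparison argument, applied along the closed-loop trajectory. Because $f$, $g$ are locally Lipschitz and $\vu(\vx)$ is locally Lipschitz, the closed-loop vector field $F(\vx)\coloneqq f(\vx)+g(\vx)\vu(\vx)$ is locally Lipschitz. Hence for $\vx(t_0)\in\calC$ there is a unique solution $\vx(t)$ on a maximal interval $[t_0,t_{\max})$. We prove invariance on this interval and, as in the cited statement, take forward completeness for granted (or interpret ``for all $t\ge t_0$'' as for all $t$ in the existence interval).

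Along the solution, define the scalar function $\eta(t)\coloneqq h(\vx(t))$, which is $C^1$ since $h$ is continuously differentiable. By the chain rule,
\begin{equation*}
\dot\eta(t)=\lieder_f h(\vx(t))+\lieder_g h(\vx(t))\,\vu(\vx(t)).
\end{equation*}
Whenever $\vx(t)\in\calC$, membership $\vu(\vx)\in K_{\textup{cbf}}(\vx)$ gives $\dot\eta(t)\ge-\alpha(\eta(t))$. The intended step is then to compare $\eta$ with the solution $y$ of $\dot y=-\alpha(y)$, $y(t_0)=\eta(t_0)\ge0$. For an extended class-$\calK_\infty$ function we have $\alpha(0)=0$, so $y\equiv0$ is an equilibrium, and a solution starting at $y(t_0)\ge0$ stays nonnegative. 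The comparison lemma then yields $\eta(t)\ge y(t)\ge0$, i.e., $\vx(t)\in\calC$.

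The main obstacle is that the inequality $\dot\eta\ge-\alpha(\eta)$ is only guaranteed while $\vx(t)\in\calC$, so the comparison argument looks circular. A second difficulty is that $\alpha$ need not be Lipschitz, so uniqueness of the comparison ODE may fail. To handle both, I would argue by contradiction. Suppose $\eta(t_1)<0$ for some $t_1$, and let $t^\ast\coloneqq\sup\{t\in[t_0,t_1]:\eta(t)\ge0\}$. Then $\eta(t^\ast)=0$ by continuity, and $\eta<0$ on $(t^\ast,t_1]$.

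At $t^\ast$ the state lies in $\calC$, so $\dot\eta(t^\ast)\ge-\alpha(0)=0$. This alone is not strict enough. I would therefore use the standard device of requiring the condition~\eqref{eq:sf_cbf_condition} on an open neighborhood of $\calC$, which is how \cite{ames_control_2019} actually phrase the result, with $K_{\textup{cbf}}$ membership there as well. On $(t^\ast,t^\ast+\delta)$ this gives $\dot\eta\ge-\alpha(\eta)>0$, because $\alpha(\eta)<0$ for $\eta<0$. Then $\eta$ is nondecreasing from $\eta(t^\ast)=0$ on that interval, contradicting $\eta<0$ just after $t^\ast$.

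If one insists on the inequality only on $\calC$, I would instead invoke Nagumo's theorem directly. At boundary points, $h=0$ and $\dot h\ge0$ make $F$ subtangential to $\calC$, provided $\nabla h\neq0$ on $\partial\calC$ so that $\calC$ is regular. Local Lipschitzness of $F$ then gives invariance via Brezis--Nagumo. Either route concludes $\vx(t)\in\calC$ for all $t\ge t_0$.
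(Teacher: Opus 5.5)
The paper gives no proof of this statement. It imports it from \cite{ames_control_2019}, where the hypotheses are stronger than in the transcription: the CBF inequality is imposed on a domain $D\supseteq\calC$, $\partial h/\partial\vx\neq 0$ is assumed on $\partial\calC$, and invariance then follows from Nagumo's theorem. Your worry at $\dot\eta(t^\ast)\ge 0$ is therefore justified, and some strengthening is necessary, not just convenient. Take $\dot x=u$ on $\RealSpace$ with $h(x)=-x^2$, so $\calC=\{0\}$, and $u\equiv 1$. Since $h'(0)=0$, the condition $u(x)\in K_{\textup{cbf}}(x)$ holds trivially on $\calC$, yet $x(t)=t$ leaves $\calC$ at once. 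So the literal statement is false, and your proposal proves a correct version of it.

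Under either of your strengthenings the argument is sound.

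Your Nagumo route is the standard argument behind the cited result: on $\partial\calC$, $h=0$, $\dot h\ge 0$ and $\nabla h\neq 0$ give subtangentiality, and uniqueness comes from local Lipschitzness of the closed loop. This route needs the inequality only on $\calC$, exactly as the paper's Definition~\ref{def:sf_cbf} states it. So adding the regularity condition on $\partial\calC$ alone repairs the statement.

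Your first-exit route is genuinely different and more elementary. It uses only continuity of $\eta$, openness of the neighborhood, and $\alpha(r)<0$ for $r<0$. It therefore needs no tangent cones, no regularity condition, no comparison lemma, and no Lipschitz assumption on $\alpha$. The price is that the closed-loop inequality must hold on an open neighborhood of $\calC$, not just on $\calC$.

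You can drop the comparison-lemma paragraph altogether. In the contradiction step, choose $\delta\le t_1-t^\ast$ small enough that $\vx(t)$ stays in the neighborhood. The mean value theorem then gives $\eta(t)>0$ on $(t^\ast,t^\ast+\delta)$ directly.
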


For safety functions with relative degree greater than one, the control input does not appear in the first derivative of $h$. We therefore use high-order CBFs (HOCBFs)~\cite{xiao_control_2019}. Let $h$ have relative degree $r$ with respect to~\eqref{eq:sf_control_affine}. Define
\begin{subequations}
\label{eq:sf_hocbf_functions}
\begin{align}
\psi_0(\vx) &\coloneqq h(\vx), \nonumber \\
\psi_q(\vx) &\coloneqq \dot{\psi}_{q-1}(\vx)+\alpha_q(\psi_{q-1}(\vx)),
\quad q=1,\ldots,r-1, \nonumber \\
\psi_r(\vx,\vu) &\coloneqq
\lieder_f\psi_{r-1}(\vx)+\lieder_g\psi_{r-1}(\vx)\vu
+\alpha_r(\psi_{r-1}(\vx)), \nonumber
\end{align}
\end{subequations}
where $\alpha_1,\ldots,\alpha_r$ are class-$\calK$ functions. For $q=1,\ldots,r$, define the intermediate safe sets
\begin{equation}
\calC_q
\coloneqq
\{\vx\in\StateSpace \mid \psi_{q-1}(\vx)\ge 0\}.
\label{eq:sf_hocbf_sets}
\end{equation}

\begin{definition}[HOCBF~\cite{xiao_control_2019}]
\label{def:sf_hocbf}
The function $h$ is a HOCBF of relative degree $r$ for~\eqref{eq:sf_control_affine} if there exist class-$\calK$ functions $\alpha_1,\ldots,\alpha_r$ such that
\begin{equation}
\sup_{\vu\in\ControlSpace}\psi_r(\vx,\vu)\ge 0,
\qquad
\forall \vx\in \bigcap_{q=1}^{r}\calC_q .
\label{eq:sf_hocbf_condition}
\end{equation}
The associated set of HOCBF-admissible inputs is
\begin{equation}
K_{\textup{hocbf}}(\vx)
\coloneqq
\left\{
\vu\in\ControlSpace
\ \middle|\
\psi_r(\vx,\vu)\ge 0
\right\}.
\label{eq:sf_khocbf}
\end{equation}
\end{definition}

\begin{theorem}[\cite{xiao_control_2019}]
\label{thm:sf_hocbf_forward_invariance}
Suppose that $h$ is a HOCBF of relative degree $r$ and that the feedback controller $\vu(\vx)$ is locally Lipschitz and satisfies $\vu(\vx)\in K_{\textup{hocbf}}(\vx)$ for all $\vx\in\cap_{q=1}^{r}\calC_q$. If $\vx(t_0)\in \bigcap_{q=1}^{r}\calC_q$,
then $\cap_{q=1}^{r}\calC_q$ is forward invariant for the closed-loop system~\eqref{eq:sf_control_affine}.
\end{theorem}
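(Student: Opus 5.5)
The plan is to reduce the HOCBF statement to the first-order result, Theorem~\ref{thm:sf_cbf_forward_invariance}, applied successively to the auxiliary functions $\psi_{r-1},\psi_{r-2},\ldots,\psi_0$. Set $\calC^\cap \coloneqq \bigcap_{q=1}^{r}\calC_q$ and let $\vx(t)$ be the closed-loop solution starting at $\vx(t_0)\in\calC^\cap$. Local Lipschitzness of $f$, $g$ and $\vu(\cdot)$ gives a unique solution on a maximal interval $[t_0,t_1)$. Throughout, I will use the comparison lemma for scalar differential inequalities: if $y$ is absolutely continuous, $\dot y(t)\ge -\alpha(y(t))$ with $\alpha$ locally Lipschitz class-$\calK$ (extended to negative arguments), and $y(t_0)\ge 0$, then $y(t)\ge 0$ on the interval of existence. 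This is exactly the mechanism behind Theorem~\ref{thm:sf_cbf_forward_invariance}.

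\textbf{Step 1 (top layer).} While $\vx(t)\in\calC^\cap$, the admissibility $\vu(\vx)\in K_{\textup{hocbf}}(\vx)$ gives $\psi_r(\vx,\vu(\vx))\ge 0$. Since $\psi_r$ is the Lie derivative along the closed loop, this reads $\dot\psi_{r-1}\ge -\alpha_r(\psi_{r-1})$. Together with $\psi_{r-1}(\vx(t_0))\ge 0$, which holds because $\vx(t_0)\in\calC_r$, the comparison lemma gives $\psi_{r-1}(\vx(t))\ge 0$.

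\textbf{Step 2 (downward induction).} For $q=r-1,\ldots,1$, the definition $\psi_q=\dot\psi_{q-1}+\alpha_q(\psi_{q-1})$ together with $\psi_q\ge 0$ gives $\dot\psi_{q-1}\ge-\alpha_q(\psi_{q-1})$. Since $\psi_{q-1}(\vx(t_0))\ge 0$ from $\vx(t_0)\in\calC_q$, the comparison lemma gives $\psi_{q-1}(\vx(t))\ge 0$. The induction ends with $h=\psi_0\ge 0$, so every $\calC_q$ is retained.

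\textbf{Step 3 (closing the circularity).} I expect this to be the main obstacle. Step 1 uses the admissibility condition only while the state is already in $\calC^\cap$, so the argument is circular as stated. I would close it by contradiction. Let $\tau=\inf\{t\ge t_0:\vx(t)\notin\calC^\cap\}$ and suppose $\tau<t_1$. On $[t_0,\tau]$ the state lies in the closed set $\calC^\cap$, so Steps 1--2 apply there. The delicate part is ruling out an immediate exit just after $\tau$, because $\vu$ is only guaranteed admissible on $\calC^\cap$. Two fixes are available:
\begin{itemize}
\item assume, as is implicit in the QP implementation, that $\vu(\vx)\in K_{\textup{hocbf}}(\vx)$ on an open neighborhood of $\calC^\cap$; or
\item use a Nagumo/sub-tangentiality argument on the boundary, where some active $\psi_{q-1}=0$ forces $\dot\psi_{q-1}\ge\psi_q\ge 0$.
\end{itemize}
I would take the first route and cite the standard proof in~\cite{xiao_control_2019}. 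The remaining technical points are that $\psi_q$ must be $C^1$ for $q\le r-1$, which requires $h$ to be $C^r$ and the $\alpha_q$ to be $C^{r-q}$, and that the $\alpha_q$ must be locally Lipschitz so the comparison lemma applies. Finally, forward invariance on $[t_0,t_1)$ yields the conclusion, with $t_1=\infty$ given whenever solutions remain bounded, for instance on compact $\calC^\cap$ or under the velocity bounds~\eqref{eq:bounds}.
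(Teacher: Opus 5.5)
The paper gives no proof of this theorem; it is quoted from \cite{xiao_control_2019}, and the proof there is essentially your Steps~1--2. There, $\psi_r\ge0$ along the solution plus the comparison lemma gives $\psi_{r-1}\ge0$, and the same lemma is then applied to $\psi_{r-2},\ldots,\psi_0$. Your smoothness caveats ($h\in C^r$, $\alpha_q\in C^{r-q}$, $\alpha_q$ defined for negative arguments) are the right repairs of the paper's phrasing, which only asks for $h\in C^1$.

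The one real gap is Step~3. The circularity you flag is genuine for the hypothesis as written, where admissibility is required only on $\calC^\cap$. Your chosen fix does not close it; it strengthens the hypothesis by requiring $\vu(\vx)\in K_{\textup{hocbf}}(\vx)$ on an open neighborhood, so you prove a weaker theorem. This is essentially what the cited source assumes: the HOCBF inequality holds along the solution for all $t\ge t_0$.

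To prove the statement as posed you need your Nagumo route. The ingredient you left out is a constraint qualification. Without one, nonnegative boundary derivatives do not imply invariance: for $r=1$, $h(x)=-x^2$, $\dot x=u$, the condition holds trivially on $\calC=\{0\}$, yet $u\equiv1$ leaves $\calC$. Here the relative-degree hypothesis supplies it:
\begin{itemize}
\item If $L_gL_f^kh\equiv0$ for $k<r-1$ and $L_gL_f^{r-1}h\neq0$ near $\partial\calC^\cap$, then $dh,dL_fh,\ldots,dL_f^{r-1}h$ are linearly independent.
\item By induction $\psi_q=L_f^qh+\phi_q(h,\ldots,L_f^{q-1}h)$, so $d\psi_0,\ldots,d\psi_{r-1}$ are linearly independent, and LICQ holds for every active subset.
\item Hence the Bouligand tangent cone of $\calC^\cap$ at a boundary point is $\{v: d\psi_{q-1}(\vx)v\ge0 \text{ for all active } q\}$.
\item For each active $q$, the closed-loop field satisfies $d\psi_{q-1}(\vx)\,(f+g\vu)(\vx)=\psi_q-\alpha_q(0)=\psi_q\ge0$. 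This uses $\vx\in\calC_{q+1}$ when $q<r$, and admissibility $\psi_r(\vx,\vu(\vx))\ge0$ when $q=r$.
\end{itemize}
Nagumo's theorem plus uniqueness of solutions then gives forward invariance directly, with no comparison lemma needed.

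As a side remark, your comparison step does not need $\alpha_q$ to be Lipschitz. If $y(t_1)<0$, let $s<t_1$ be the last time with $y(s)\ge0$, so $y(s)=0$. On $(s,t_1]$ you have $\dot y\ge-\alpha(y)>0$, hence $y(t_1)>0$, a contradiction.
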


In \autoref{subsec:seamlis_positional_cbf}, the distance-based collision-avoidance constraints have relative degree two with respect to the acceleration input. For such constraints, we use the standard second-order HOCBF form with linear class-$\calK$ functions $\alpha_0,\alpha_1>0$:
\begin{equation}
\psi(\vx,\vu)
\coloneqq
\ddot{h}(\vx,\vu)+\alpha_1\dot{h}(\vx)+\alpha_0 h(\vx)\ge 0.
\label{eq:sf_second_order_hocbf}
\end{equation}
This yields the obstacle and inter-agent constraints $\psi_{ij}^{\textup{obs}}$ and $\psi_{i\ell}^{\textup{ag}}$ used by the positional safety filter.

\subsection{\texttt{gatekeeper} Algorithm}
\label{subsec:gatekeeper_building_block}

We next introduce a recent backup-based safety filter: \texttt{gatekeeper}~\cite{agrawal_gatekeeper_2023, agrawal_gatekeeper_2024}. Let $\calC\subseteq\StateSpace$ be a safe set and let $\calS_0\subseteq\calC$ be a terminal controlled-invariant set. Let $\pi_{\textup{nom}}:\StateSpace\to\ControlSpace$ denote a task-oriented nominal policy, and let $\pi_{\textup{b}}:\StateSpace\to\ControlSpace$ denote a backup policy.

For any locally Lipschitz feedback policy $\pi:\StateSpace\to\ControlSpace$, let $\varphi_t^{\pi}(\vx_0)$ denote the state at time $t\ge0$ of the closed-loop system $\dot{\vx}=f(\vx)+g(\vx)\pi(\vx)$ initialized at $\vx(0)=\vx_0$.

\begin{definition}[Recoverable set~\cite{kim_backupbased_2026}]
\label{def:gk_recoverable_set}
Let $\calS_0\subseteq\calC\subseteq\StateSpace$, let $\pi:\StateSpace\to\ControlSpace$ be a feedback policy, and let $T>0$. The recoverable set of $\calS_0$ under $\pi$ over horizon $T$ is
\begin{equation}
\begin{aligned}
\calR^{\pi}(T;\calS_0)
\coloneqq
\big\{
\vx_0\in\calC
\ \big|\
&\varphi_t^{\pi}(\vx_0)\in\calC,\ \forall t\in[0,T],\\
&\varphi_T^{\pi}(\vx_0)\in\calS_0
\big\}.
\end{aligned}
\label{eq:gk_recoverable_set}
\end{equation}
\end{definition}

\begin{definition}[Backup policy~\cite{kim_backupbased_2026}]
\label{def:gk_backup_policy}
Let $\calS_0\subseteq\calC\subseteq\StateSpace$ and let $T_B>0$. A feedback policy $\pi_{\textup{b}}:\StateSpace\to\ControlSpace$ is a backup policy for $\calS_0$ if $\calS_0$ is forward invariant under $\pi_{\textup{b}}$ and the recoverable set $\calR^{\pi_{\textup{b}}}(T_B;\calS_0)$ contains a neighborhood of $\calS_0$.
\end{definition}

Thus, the backup policy keeps the system inside $\calS_0$ once it reaches $\calS_0$, and it can drive nearby states into $\calS_0$ within the recovery horizon $T_B$ without leaving $\calC$.

Let $\Delta t>0$ be the control-update interval of the digital safety monitor, and let $t_k\coloneqq t_0+k\Delta t$, $k\in\mathbb{Z}_{\ge0}$. At each update time, the monitor evaluates whether the nominal policy can be executed for a switching time $T_S\ge0$ before handing control to the backup policy.

\begin{definition}[Candidate trajectory~\cite{agrawal_gatekeeper_2024,kim_backupbased_2026}]
\label{def:gk_candidate_trajectory}
Given the current state $\vx_k=\vx(t_k)$ and a switching time $T_S\ge0$, the candidate trajectory is
\begin{equation}
\begin{split}
\raisetag{7.5ex}
\chi^{\textup{can}}(\tau;\vx_k,T_S)
\coloneqq
\begin{cases}
\varphi_{\tau}^{\pi_{\textup{nom}}}(\vx_k),
& \tau\in[0,T_S],\\[1mm]
\varphi_{\tau-T_S}^{\pi_{\textup{b}}}
\!\left(
\varphi_{T_S}^{\pi_{\textup{nom}}}(\vx_k)
\right),
& \tau\in[T_S,T_S+T_B].
\end{cases}
\label{eq:gk_candidate_trajectory}
\end{split}
\end{equation}
\end{definition}

The candidate trajectory first follows the nominal policy for duration $T_S$, and then follows the backup policy for duration $T_B$.

\begin{definition}[Validity indicator]
\label{def:gk_validity}
For fixed $T_B>0$, the candidate trajectory is valid if it remains in $\calC$ over the full candidate horizon and reaches $\calS_0$ at the terminal time:
{\footnotesize
\begin{equation}
\begin{split}
\raisetag{5.5ex}
\operatorname{Valid}(\vx_k;T_S,T_B)
\coloneqq
\begin{cases}
1, &
\begin{array}{l}
\chi^{\textup{can}}(\tau;\vx_k,T_S)\in\calC,\quad
\forall \tau\in[0,T_S+T_B],\\
\chi^{\textup{can}}(T_S+T_B;\vx_k,T_S)\in\calS_0,
\end{array}\\[3mm]
0, & \text{otherwise}.
\end{cases}
\label{eq:gk_validity}
\end{split}
\end{equation}
}
When $T_B$ is fixed, we write $\operatorname{Valid}(\vx_k;T_S)$ for notational simplicity.
\end{definition}

Let $T_H>0$ be the search horizon for the nominal segment. The \texttt{gatekeeper} algorithm computes
\begin{equation}
T_S^{\star}(\vx_k)
\coloneqq
\sup
\left\{
T_S\in[0,T_H]
\ \middle|\
\operatorname{Valid}(\vx_k;T_S)=1
\right\}.
\label{eq:gk_ts_star_general}
\end{equation}
The applied policy is then
\begin{equation}
\pi_{\textup{GK}}(\vx_k)
=
\begin{cases}
\pi_{\textup{nom}}(\vx_k), & T_S^{\star}(\vx_k)>0,\\
\pi_{\textup{b}}(\vx_k), & T_S^{\star}(\vx_k)=0.
\end{cases}
\label{eq:gk_decision_general}
\end{equation}
In implementation, the search in~\eqref{eq:gk_ts_star_general} is performed over a finite grid $\mathcal{T}_H=\{0,\Delta t,2\Delta t,\ldots,T_H\}$. In \autoref{subsec:seamlis_attitude_gatekeeper}, we construct the attitude safety filter using a visibility-promoting yaw policy as the nominal policy and a velocity-tracking yaw policy as the backup policy.

\section{\texttt{SEAML\MakeLowercase{i}S}: SAFETY FILTERS FOR DECENTRALIZED EXPLORATION}
\label{sec:seamlis_method}

\subsection{Overview}

\texttt{SEAMLiS} integrates the two safety-filter mechanisms introduced in \autoref{sec:safety_filter_mechanisms} into the execution layer of a decentralized exploration stack. The upstream goal assignment and local path-planning modules are treated as interchangeable components. As shown in Fig.~\ref{fig:seamlis_arch}, robot $i$ receives a waypoint path $\calP_i(t)$ from the local planner, maintains the local map components $\calB_i(t)$ and $\calO_i(t)$ from onboard sensing, and applies two safety filters:
\begin{enumerate}[label=\roman*)]
    \item a \textbf{\texttt{gatekeeper}-based attitude filter}, which maintains sufficient visibility of the critical boundary between known-free and unknown space; and
    \item a \textbf{\ac{CBF}-based positional filter}, which avoids known obstacles, newly detected obstacles, and neighboring robots.
\end{enumerate}

The attitude filter acts on the yaw-rate input $\omega_i$ and decides whether the robot can safely execute a nominal visibility-promoting yaw policy or must switch to a conservative velocity-tracking backup policy. The positional filter acts on the acceleration input $\va_i$ and enforces high-order CBF constraints for collision avoidance. Figure~\ref{fig:seamlis_concept} illustrates the perception-limited failure mode addressed by the proposed execution-layer safety filters and the corresponding visibility-aware correction.

\begin{figure}[t]
    \centering
    \includegraphics[width=0.99\linewidth]{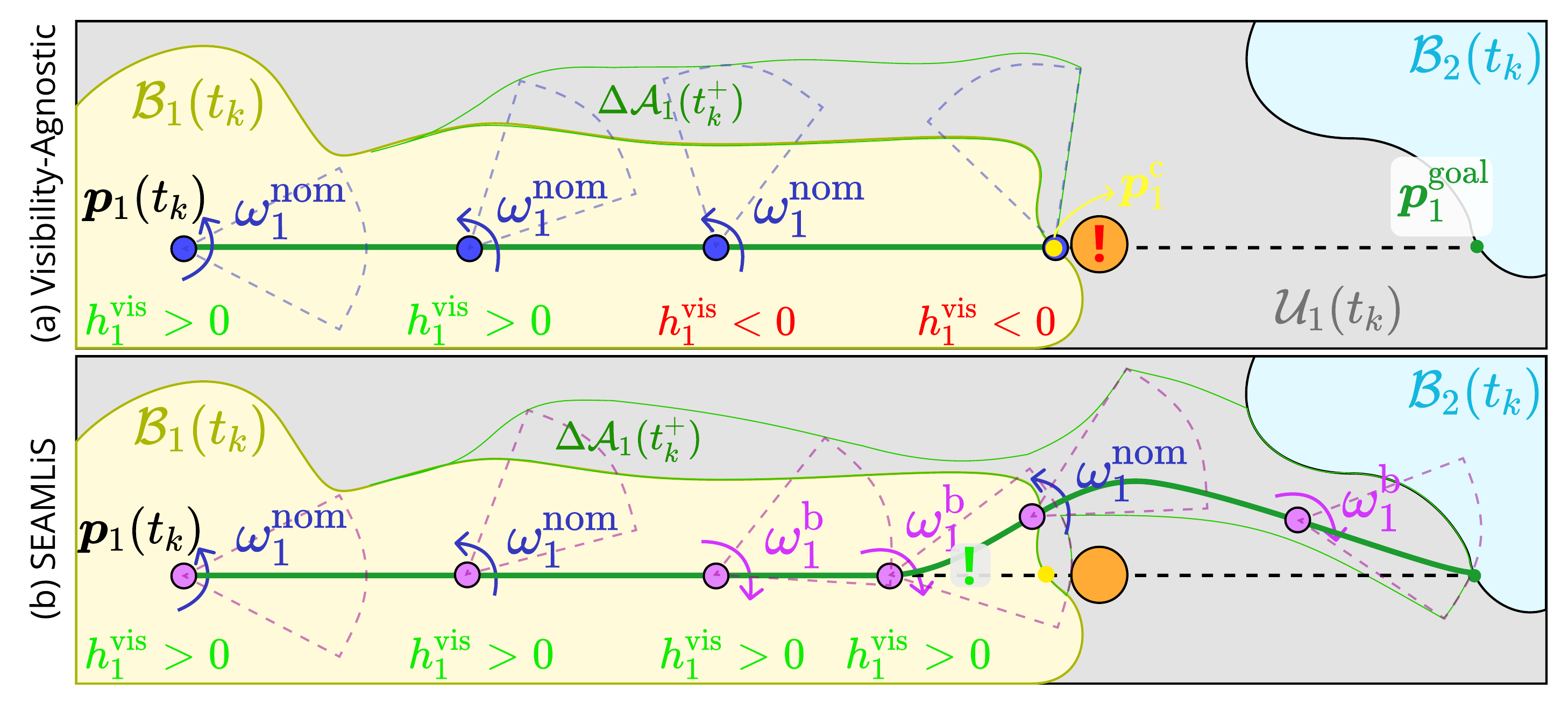}
    \caption{Illustration of the perception-limited safety issue and the proposed \texttt{SEAMLiS} algorithm. The visibility-agnostic execution in (a) follows a nominal visibility-promoting yaw command $\omega_1^{\textup{nom}}$ for robot~1 that maximizes the incremental sensed area $\Delta\calA_1$, but the critical point $\vp_1^{\textup{c}}$ on the boundary between the robot~1's local known-free region $\calB_1(t_k)$ and the unknown region $\calU_1(t_k)$ can remain outside the FoV, i.e., $h_1^{\textup{vis}}(\cdot)<0$. In this case, an obstacle near the critical point may be detected after the available stopping distance is insufficient. In (b), \texttt{SEAMLiS} uses the \texttt{gatekeeper} filter to switch from the nominal yaw command to the backup velocity-tracking yaw command $\omega_1^{\textup{b}}$ whenever needed, keeping $h_1^{\textup{vis}}(\cdot) \ge 0$ while still allowing nominal visibility maximization when it is certified safe.}
    \label{fig:seamlis_concept}
\end{figure}

\subsection{Nominal Local Policies}
\label{subsec:seamlis_nominal_policies}

\subsubsection{Nominal positional tracking}

Let $\va_i^{\textup{nom}}(t)$ denote the nominal acceleration command generated by an arbitrary local path-tracking controller for the waypoint path $\calP_i(t)$. The proposed safety layer does not depend on the specific tracking controller; it only requires the nominal command to be filtered through the admissible acceleration set defined in \autoref{subsec:seamlis_positional_cbf}.

\subsubsection{Nominal visibility-promoting attitude policy}

The nominal yaw policy $\pi_{\theta}^{\textup{vis}}$ greedily maximizes the incremental sensed area of previously unobserved space. At each control step, robot $i$ enumerates a finite set of candidate headings
$\Theta \coloneqq \{\theta^{(k)}\}_{k=1}^{K}$. For each candidate heading, it estimates
\begin{equation}
\Delta \calA_i^{(k)}(t)
\coloneqq
\operatorname{Area}
\Big(
\calF_i(t;\theta^{(k)})
\setminus
\big(\calB_i(t)\cup\calO_i(t)\big)
\Big),
\label{eq:delta_area}
\end{equation}
and selects
\begin{equation}
\theta_i^{\star}(t)
\in
\argmax_{\theta^{(k)}\in\Theta}
\Delta \calA_i^{(k)}(t).
\label{eq:nominal_heading_selection}
\end{equation}
The resulting nominal yaw-rate command is
\begin{equation}
\omega_i^{\textup{nom}}(t)
=
\operatorname{sat}_{\omega_{\max}}
\!\left(
k_{\theta}\operatorname{wrap}
\big(\theta_i^{\star}(t)-\theta_i(t)\big)
\right),
\label{eq:yaw_nominal}
\end{equation}
where $k_{\theta}>0$ and $\operatorname{sat}_{\omega_{\max}}(\cdot)$ saturates to $[-\omega_{\max},\omega_{\max}]$.

The nominal policy improves exploration efficiency by steering the finite-FoV sensor toward information-rich regions. However, it may rotate the sensor away from the direction of motion. Under finite sensing range and limited FoV, this can delay the detection of hidden obstacles along the upcoming trajectory. Therefore, \texttt{SEAMLiS} does not directly apply~\eqref{eq:yaw_nominal}; instead, it uses the attitude safety filter in \autoref{subsec:seamlis_attitude_gatekeeper}.

\subsection{Visibility-Safety Set for the Attitude Filter}
\label{subsec:visibility_safe_set}

For analysis, we represent the planner output $\calP_i(t)$ as a continuous curve. Fix time $t$, and let $\gamma_i:[0,L_i(t)]\to\calW$ denote an arc-length parameterization of the current planned path of length $L_i(t)$, for example the piecewise-linear interpolation of the waypoint sequence $\calP_i(t)$, with $\gamma_i(0)=\vp_i(t)$.

\begin{definition}[Critical point]
\label{def:critical_point}
The critical point $\vp_i^{\textup{c}}(t)$ is the first intersection of the planned path with the boundary of the known-free region:
\begin{equation}
\vp_i^{\textup{c}}(t)
\coloneqq
\gamma_i\!\left(s_i^{\textup{c}}(t)\right),
\quad
s_i^{\textup{c}}(t)
\coloneqq
\inf
\left\{
s\ge0
\ \middle|\
\gamma_i(s)\in\partial\calB_i(t)
\right\}.
\label{eq:critical_def}
\end{equation}
If the planned path does not intersect $\partial\calB_i(t)$ over the look-ahead horizon, we set $s_i^{\textup{c}}(t)=+\infty$.
\end{definition}

The critical point identifies the location beyond which the robot begins to rely on the optimistic unknown-is-free planning assumption. To preserve safety under bounded deceleration, the robot must observe this boundary before the remaining path distance becomes smaller than the required braking distance.

Let
\begin{equation}
d_i^{\textup{c}}(t)\coloneqq s_i^{\textup{c}}(t)
\label{eq:critical_distance}
\end{equation}
denote the path distance to the critical point. When $d_i^{\textup{c}}(t)<+\infty$, define the critical-point direction angle
\begin{equation}
\beta_i^{\textup{c}}(t)
\coloneqq
\angle\!\left(\vp_i^{\textup{c}}(t)-\vp_i(t)\right).
\label{eq:critical_bearing}
\end{equation}
The angular violation outside the FoV is
\begin{equation}
\Delta\theta_i^{\textup{c}}(t)
\coloneqq
\max
\left\{
0,\
\left|
\operatorname{wrap}
\big(
\beta_i^{\textup{c}}(t)-\theta_i(t)
\big)
\right|
-
\frac{\theta_{\textup{fov}}}{2}
\right\}.
\label{eq:delta_theta}
\end{equation}
Under the yaw-rate bound $|\omega_i|\le\omega_{\max}$, the fastest admissible rotation that brings the critical point into the FoV requires $\Delta\theta_i^{\textup{c}}(t)/\omega_{\max}$ seconds. This recovery time is achieved when the yaw controller rotates the sensor toward the direction of the critical point at the maximum admissible rate. During this recovery interval, the robot can translate by at most $v_{\max}\Delta\theta_i^{\textup{c}}(t)/\omega_{\max}$.

\begin{definition}[Visibility constraint function]
\label{def:visibility_constraint}
For $d_i^{\textup{c}}(t)<+\infty$, define
\begin{equation}
h_i^{\textup{vis}}(t)
\coloneqq
d_i^{\textup{c}}(t)
-
d_{\textup{br}}
-
\frac{v_{\max}}{\omega_{\max}}
\Delta\theta_i^{\textup{c}}(t).
\label{eq:hvis}
\end{equation}
The state is \emph{visibility-safe} if $h_i^{\textup{vis}}(t)\ge0$. If $d_i^{\textup{c}}(t)=+\infty$, the visibility constraint is inactive and we set $h_i^{\textup{vis}}(t)=+\infty$.
\end{definition}

The condition $h_i^{\textup{vis}}(t)\ge0$ means that the robot has enough remaining path distance to rotate the critical point into the FoV while still retaining one worst-case braking distance. The braking-distance term $d_{\textup{br}}$ is analogous to the recoverability margin in backup-based safety filters~\cite{kim_backupbased_2026}: after a hidden obstacle is detected, the positional safety filter must still have enough distance to stop or execute an avoidance maneuver. A less conservative implementation can replace $d_{\textup{br}}$ by the instantaneous braking distance $\|\vv_i(t)\|^2/(2a_{\max})$.

For the attitude safety filter, it is useful to view the visibility margin
in~\eqref{eq:hvis} as a function of the robot state, parameterized by the
local path and the known-free set used by the monitor. Let $\vx_i \coloneqq [\vp_i^\top,\vv_i^\top,\theta_i]^\top $. We write $h_i^{\textup{vis}}(\vx_i;\calP_i(t),\calB_i(t))$ for the value of~\eqref{eq:hvis} obtained by computing the critical point
along $\calP_i(t)$ with respect to $\partial\calB_i(t)$ and evaluating the
resulting distance and angular terms at state $\vx_i$. With the same
convention, $d_i^{\textup{c}}(\vx_i;\calP_i(t),\calB_i(t))$ and
$\Delta\theta_i^{\textup{c}}(\vx_i;\calP_i(t),\calB_i(t))$ denote the
corresponding critical-point distance and angular violation outside the FoV.

The visibility-safe set for robot $i$ is then defined as
\begin{equation}
\calC_i^{\textup{vis}}(t)
\coloneqq
\left\{
\vx_i\in \StateSpace \mid
h_i^{\textup{vis}}(\vx_i;\calP_i(t),\calB_i(t))\ge0
\right\}.
\label{eq:visibility_safe_set}
\end{equation}
We also define the terminal visibility set
\begin{equation}
\begin{aligned}
\calS_{0,i}^{\textup{vis}}(t)
\coloneqq
\Big\{
\vx_i\in\calC_i^{\textup{vis}}(t)
\ \Big|\
& d_i^{\textup{c}}(\vx_i;\calP_i(t),\calB_i(t))=+\infty \\
& \text{or }
\Delta\theta_i^{\textup{c}}(\vx_i;\calP_i(t),\calB_i(t))=0
\Big\}.
\end{aligned} \nonumber
\label{eq:visibility_terminal_set}
\end{equation}
Thus, $\calS_{0,i}^{\textup{vis}}(t)$ consists of visibility-safe states
in which either no known-free/unknown boundary appears along the current
look-ahead path, or the critical point is already inside the FoV.

\subsection{\texttt{gatekeeper}-Based Attitude Safety Filter}
\label{subsec:seamlis_attitude_gatekeeper}

For robot $i$, the safe set, terminal set, and policies in
\autoref{subsec:gatekeeper_building_block} are chosen as
\[
\calC = \calC_i^{\textup{vis}}(t_k),\quad
\calS_0 = \calS_{0,i}^{\textup{vis}}(t_k),\quad
\pi_{\textup{nom}} = \pi_{\theta}^{\textup{vis}},\quad
\pi_{\textup{b}} = \pi_{\theta}^{\textup{b}},
\]
where the sets are evaluated using the local path $\calP_i(t_k)$ and
known-free set $\calB_i(t_k)$ available at the current update time $t_k$.
The control input is the yaw rate $\omega_i$. The nominal policy
$\pi_{\theta}^{\textup{vis}}$ is the visibility-promoting yaw policy
defined in~\eqref{eq:yaw_nominal}, and the backup policy
$\pi_{\theta}^{\textup{b}}$ is the velocity-tracking yaw policy defined
below. The attitude filter therefore preserves the visibility constraint
$h_i^{\textup{vis}}\ge0$ while allowing the nominal visibility-promoting
yaw command whenever its future execution is certified safe.

\subsubsection{Backup velocity-tracking attitude policy}

The backup yaw policy aligns the sensor with the instantaneous direction of
translation. Define
\begin{equation}
\theta_i^{\textup{vel}}(t)
\coloneqq
\begin{cases}
\angle(\vv_i(t)), & \|\vv_i(t)\|>v_{\min},\\
\theta_i(t), & \text{otherwise},
\end{cases}
\label{eq:velocity_heading}
\end{equation}
where $v_{\min}>0$ is a small velocity threshold. The backup yaw-rate
command is
\begin{equation}
\omega_i^{\textup{b}}(t)
\coloneqq
\operatorname{sat}_{\omega_{\max}}
\!\left(
k_{\textup{b}}
\operatorname{wrap}
\big(
\theta_i^{\textup{vel}}(t)-\theta_i(t)
\big)
\right),
\label{eq:yaw_backup}
\end{equation}
with $k_{\textup{b}}>0$. This policy prioritizes motion-aligned sensing near
unknown space and serves as the backup policy $\pi_{\theta}^{\textup{b}}$.

For the safety analysis, $k_{\textup{b}}$ is chosen large enough that the
backup command saturates whenever the velocity direction lies outside the
FoV:
\begin{equation}
k_{\textup{b}}
\ge
\frac{2\omega_{\max}}{\theta_{\textup{fov}}}.
\label{eq:backup_gain_condition}
\end{equation}
Thus, whenever the direction of motion is outside the FoV, the backup policy
rotates the sensor toward that direction at the maximum admissible yaw rate
$\omega_{\max}$. This is the yaw-rate bound used in the visibility margin
in~\eqref{eq:hvis}.

We also choose the backup horizon to cover the worst-case angular recovery
time:
\begin{equation}
T_B
\ge
\frac{
\max\left\{0,\pi-\frac{\theta_{\textup{fov}}}{2}\right\}
}{
\omega_{\max}
}.
\label{eq:backup_horizon_condition}
\end{equation}
With these choices, the immediate-backup candidate $T_S=0$ is valid from
any visibility-safe update state. Indeed, if
$\vx_i(t_k)\in\calC_i^{\textup{vis}}(t_k)$, then
$h_i^{\textup{vis}}(\vx_i(t_k);\calP_i(t_k),\calB_i(t_k))\ge0$ already
reserves both the worst-case distance traveled while rotating the critical
point into the FoV and the braking distance $d_{\textup{br}}$. Therefore,
executing the backup yaw policy immediately keeps the candidate rollout
inside $\calC_i^{\textup{vis}}(t_k)$ and reaches
$\calS_{0,i}^{\textup{vis}}(t_k)$ within the backup horizon.

\subsubsection{Candidate yaw rollout and validity}

Let $t_k$ be the current update time. During the attitude-filter prediction,
the local path and known-free set are fixed at their current values,
$\calP_i(t_k)$ and $\calB_i(t_k)$. For a switching time
$T_S\in[0,T_H]$ and a backup horizon $T_B>0$, let
\[
\hat{\vx}_i(\tau;T_S)
\coloneqq
[
\hat{\vp}_i(\tau)^\top,
\hat{\vv}_i(\tau)^\top,
\hat{\theta}_i(\tau;T_S)
]^\top
\]
denote the predicted state used by the attitude monitor at prediction time
$\tau\in[0,T_S+T_B]$. The translational component
$(\hat{\vp}_i,\hat{\vv}_i)$ is propagated conservatively along the current
planned path $\calP_i(t_k)$ with speed bounded by $v_{\max}$. The predicted
yaw evolves according to
\begin{equation}
\dot{\hat{\theta}}_i(\tau;T_S)
=
\omega_i^{\textup{can}}(\tau;T_S),
\qquad
\hat{\theta}_i(0;T_S)=\theta_i(t_k),
\label{eq:predicted_yaw_dynamics}
\end{equation}
where the candidate yaw-rate policy is
{\footnotesize
\begin{equation}
\omega_i^{\textup{can}}(\tau;T_S)
=
\begin{cases}
\pi_{\theta}^{\textup{vis}}
\big(
\hat{\vx}_i(\tau;T_S);
\calP_i(t_k),\calB_i(t_k),\calO_i(t_k)
\big),
& \tau\in[0,T_S),\\[1mm]
\pi_{\theta}^{\textup{b}}
\big(
\hat{\vx}_i(\tau;T_S)
\big),
& \hspace{-3.0em}\tau\in[T_S,T_S+T_B].
\end{cases}
\label{eq:candidate_yaw}
\end{equation}
}
Here $\pi_{\theta}^{\textup{vis}}$ is the nominal visibility-promoting yaw
policy in~\eqref{eq:yaw_nominal}, and $\pi_{\theta}^{\textup{b}}$ is the
backup velocity-tracking yaw policy in~\eqref{eq:yaw_backup}.

Using the state-based visibility margin defined in
\autoref{subsec:visibility_safe_set}, the predicted visibility margin is
\begin{equation}
\hat h_i^{\textup{vis}}(\tau;T_S)
\coloneqq
h_i^{\textup{vis}}
\big(
\hat{\vx}_i(\tau;T_S);
\calP_i(t_k),\calB_i(t_k)
\big).
\label{eq:predicted_hvis}
\end{equation}
The \texttt{SEAMLiS}-specific validity indicator is then
{\footnotesize
\begin{equation}
\operatorname{Valid}_i^{\textup{vis}}(t_k;T_S)
\coloneqq
\begin{cases}
1, &
\begin{array}{l}
\hat h_i^{\textup{vis}}(\tau;T_S)\ge0,\quad
\forall \tau\in[0,T_S+T_B],\\
\hat{\vx}_i(T_S+T_B;T_S)\in\calS_{0,i}^{\textup{vis}}(t_k),
\end{array}\\[3mm]
0, & \text{otherwise}.
\end{cases}
\label{eq:seamlis_validity}
\end{equation}
}
Equivalently, a candidate yaw rollout is valid if it remains in the
visibility-safe set $\calC_i^{\textup{vis}}(t_k)$ over the full candidate
horizon and reaches the terminal visibility set
$\calS_{0,i}^{\textup{vis}}(t_k)$ after the backup segment.

The filter searches over the discrete set
\[
\mathcal{T}_H
=
\{0,\Delta t,2\Delta t,\ldots,T_H\}
\]
and computes
\begin{equation}
T_{S,i}^{\star}(t_k)
\coloneqq
\max
\left\{
T_S\in\mathcal{T}_H
\ \middle|\
\operatorname{Valid}_i^{\textup{vis}}(t_k;T_S)=1
\right\}.
\label{eq:seamlis_ts_star}
\end{equation}

Here, $T_{S,i}^{\star}(t_k)$ is the robot-$i$ specialization of the general \texttt{gatekeeper} switching time $T_S^{\star}(\vx_k)$ in~\eqref{eq:gk_ts_star_general}, evaluated at the current state $\vx_i(t_k)$ with the local planning and mapping quantities $\calP_i(t_k)$, $\calB_i(t_k)$, and $\calO_i(t_k)$ held fixed during the candidate-rollout evaluation.

The candidate $T_S=0$ corresponds to immediately applying the backup yaw
policy. By the backup-policy construction above, this immediate-backup
candidate is valid whenever the current state is visibility-safe and the
backup horizon satisfies~\eqref{eq:backup_horizon_condition}. Therefore,
the search set in~\eqref{eq:seamlis_ts_star} is nonempty for visibility-safe
update states. Thus, $T_S=0$ is a certified backup candidate, not an
artificial fallback.

The applied yaw-rate command is
\begin{equation}
\omega_i(t_k)
=
\begin{cases}
\omega_i^{\textup{nom}}(t_k), & T_{S,i}^{\star}(t_k)>0,\\
\omega_i^{\textup{b}}(t_k), & T_{S,i}^{\star}(t_k)=0.
\end{cases}
\label{eq:seamlis_yaw_policy}
\end{equation}

Thus, the nominal visibility-promoting yaw policy is used whenever there
exists a positive certified duration before the backup policy must be
invoked. If no positive nominal duration is certified, the filter immediately
applies the velocity-tracking backup yaw.

\begin{algorithm}[t]
    \footnotesize
    \SetAlgoLined
    \DontPrintSemicolon
    \caption{\texttt{gatekeeper} Attitude Filter for Robot $i$}
    \label{alg:seamlis_gatekeeper}
    \KwIn{Current state $(\vp_i,\vv_i,\theta_i)$; planned path $\calP_i$; known-free set $\calB_i$; known obstacle set $\calO_i$; horizons $T_H,T_B$}
    \KwOut{Committed yaw-rate $\omega_i$}
    \BlankLine
    Compute $\vp_i^{\textup{c}}$ and $d_i^{\textup{c}}$ using \autoref{def:critical_point}\;
    \If{$d_i^{\textup{c}}=+\infty$}{
        \Return $\omega_i^{\textup{nom}}$\;
    }
    Compute $\Delta\theta_i^{\textup{c}}$ using~\eqref{eq:delta_theta} and $h_i^{\textup{vis}}$ using~\eqref{eq:hvis}\;
    $T_{S,i}^{\star}\gets 0$\;
    \For{$T_S=T_H,T_H-\Delta t,\ldots,0$}{
        Construct $\omega_i^{\textup{can}}(\cdot;T_S)$ using~\eqref{eq:candidate_yaw}\;
        Predict $\hat h_i^{\textup{vis}}(\cdot;T_S)$ and evaluate~\eqref{eq:seamlis_validity}\;
        \If{$\operatorname{Valid}_i^{\textup{vis}}(t_k;T_S)=1$}{
            $T_{S,i}^{\star}\gets T_S$\;
            \textbf{break}\;
        }
    }
    \If{$T_{S,i}^{\star}>0$}{
        \Return $\omega_i^{\textup{nom}}$ \tcp*{Nominal visibility-promoting yaw}
    }
    \Else{
        \Return $\omega_i^{\textup{b}}$ \tcp*{Backup velocity-tracking yaw}
    }
\end{algorithm}

\subsection{CBF-Based Positional Safety Filter}
\label{subsec:seamlis_positional_cbf}

We construct the positional safety filter by applying the CBF formulation in \autoref{subsec:cbf_building_block} for the positional subsystem
\[
\dot{\vp}_i=\vv_i,\qquad
\dot{\vv}_i=\va_i,
\]
with positional state $\vx_i^{\textup{p}}\coloneqq[\vp_i^\top,\vv_i^\top]^\top$ and control input $\va_i$. The nominal input is the path-tracking command $\va_i^{\textup{nom}}(t)$, and the CBF filter modifies this command only as needed to satisfy obstacle-avoidance and inter-agent separation constraints.

\paragraph{Obstacle avoidance constraints}

For each obstacle $j$ known to robot $i$ at time $t$, including newly detected obstacles, define the inflated safety radius
\[
r_{\textup{safe}}
\coloneqq
r_{\textup{rob}}+r_{\textup{obs}}+\epsilon,
\]
where $r_{\textup{obs}}$ is the obstacle-radius approximation and $\epsilon>0$ accounts for mapping and tracking uncertainty. Let $\vp_j^{\textup{obs}}$ be the obstacle center. Define
\begin{equation}
h_{ij}^{\textup{obs}}(\vx_i^{\textup{p}})
\coloneqq
\|\vp_i-\vp_j^{\textup{obs}}\|^2
-
r_{\textup{safe}}^2.
\label{eq:hcol}
\end{equation}
Under~\eqref{eq:dyn}, $h_{ij}^{\textup{obs}}$ has relative degree two with respect to $\va_i$. Applying the second-order HOCBF form in~\eqref{eq:sf_second_order_hocbf} gives
\begin{equation}
\begin{aligned}
\psi_{ij}^{\textup{obs}}(\vx_i^{\textup{p}},\va_i)
& \coloneqq\; 2(\vp_i-\vp_j^{\textup{obs}})^\top \va_i
+2\|\vv_i\|^2\\
&+2\alpha_1(\vp_i-\vp_j^{\textup{obs}})^\top\vv_i
+\alpha_0 h_{ij}^{\textup{obs}}(\vx_i^{\textup{p}})
\ge0,
\end{aligned}
\label{eq:hocbf_obs}
\end{equation}
where $\alpha_0,\alpha_1>0$.

\paragraph{Inter-agent collision avoidance constraints}

For each neighboring robot $\ell\neq i$ in the communication neighborhood $\calN_i(t)$, define
\[
r_{\textup{ag}}
\coloneqq
2r_{\textup{rob}}+\epsilon_{\textup{ag}},
\]
where $\epsilon_{\textup{ag}}>0$ accounts for state-estimation and communication uncertainty. Define
\begin{equation}
h_{i\ell}^{\textup{ag}}(\vx_i^{\textup{p}},\vx_\ell^{\textup{p}})
\coloneqq
\|\vp_i-\vp_\ell\|^2
-
r_{\textup{ag}}^2.
\label{eq:hag}
\end{equation}
Let $\Delta\vp_{i\ell}\coloneqq\vp_i-\vp_\ell$ and $\Delta\vv_{i\ell}\coloneqq\vv_i-\vv_\ell$. We use a decentralized pairwise HOCBF allocation~\cite{lee_distributed_2025}, obtained from the same second-order HOCBF condition with the responsibility split across the two agents:
\begin{equation}
\begin{aligned}
\psi_{i\ell}^{\textup{ag}}(\vx_i^{\textup{p}},\vx_\ell^{\textup{p}},\va_i)
&\coloneqq\; 2\Delta\vp_{i\ell}^{\top}\va_i
+\|\Delta\vv_{i\ell}\|^2\\
&+\alpha_1\Delta\vp_{i\ell}^{\top}\Delta\vv_{i\ell}
+\frac{\alpha_0}{2}
h_{i\ell}^{\textup{ag}}(\vx_i^{\textup{p}},\vx_\ell^{\textup{p}})
\ge0.
\end{aligned}
\label{eq:hocbf_agent}
\end{equation}
When both agents enforce their allocated constraints, the sum of the two inequalities recovers the standard relative-degree-two pairwise HOCBF condition for inter-agent separation.

\paragraph{Admissible acceleration set}

Let $\calJ_i(t)$ denote the index set of obstacles currently known to robot $i$. The admissible acceleration set is
\begin{equation}
\begin{aligned}
K_i^{\textup{cbf}}(t)
\coloneqq
\Big\{
\va_i\in\RealSpace^2
\ \Big|\
&\|\va_i\|\le a_{\max},\\
&\psi_{ij}^{\textup{obs}}(\vx_i^{\textup{p}},\va_i)\ge0,\quad
\forall j\in\calJ_i(t),\\
&\psi_{i\ell}^{\textup{ag}}(\vx_i^{\textup{p}},\vx_\ell^{\textup{p}},\va_i)\ge0,\quad
\forall \ell\in\calN_i(t)
\Big\}.
\end{aligned} \nonumber
\label{eq:Kcbf} 
\end{equation}
A CBF Quadratic Program (QP) implementation is
\begin{equation}
\va_i(t)
\in
\argmin_{\va\in K_i^{\textup{cbf}}(t)}
\|\va-\va_i^{\textup{nom}}(t)\|^2.
\label{eq:cbf_qp}
\end{equation}
Alternatively, the continuous-time HOCBF constraints can be converted into discrete-time constraints under a zero-order-hold assumption on the control input and then imposed within an MPC-CBF formulation~\cite{kim_how_2025}.

\subsection{Safety Guarantees}
\label{subsec:seamlis_safety_guarantees}

We now state sufficient conditions under which the combined attitude and
positional filters guarantee collision avoidance under limited sensing. The
attitude filter does not require the existence of a positive certified
nominal execution time. If no $T_S>0$ is certified, the immediate-backup
candidate $T_S=0$ is valid by construction, and the filter applies the
velocity-tracking backup yaw policy.

\begin{assumption}[Sensing range]
\label{ass:range}
The sensing range satisfies
\begin{equation}
l_{\textup{range}}
\ge
r_{\textup{safe}}+d_{\textup{br}}.
\label{eq:sensing_range_condition}
\end{equation}
\end{assumption}

\begin{lemma}[Feasibility from timely detection]
\label{lem:stop}
Consider robot $i$ under~\eqref{eq:dyn}--\eqref{eq:bounds}. If an obstacle is detected at time $t$ with clearance
\[
\|\vp_i(t)-\vp^{\textup{obs}}\|
\ge
r_{\textup{safe}}+d_{\textup{br}},
\]
then there exists an admissible acceleration input $\va_i(\cdot)$ that prevents collision with that obstacle for all future time.
\end{lemma}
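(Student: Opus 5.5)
The plan is a direct construction: exhibit the maximum-deceleration braking input and show that it keeps the robot outside the inflated obstacle disk for all future time. Let $t$ be the detection time, and write $\vp_0\coloneqq\vp_i(t)$ and $\vv_0\coloneqq\vv_i(t)$, with $\|\vv_0\|\le v_{\max}$ by~\eqref{eq:bounds}. If $\vv_0=0$, take $\va_i\equiv 0$; the robot then stays at $\vp_0$, whose clearance already exceeds $r_{\textup{safe}}$. Otherwise, define
\begin{equation*}
\va_i(\tau)=-a_{\max}\frac{\vv_0}{\|\vv_0\|}\ \text{ for } \tau\in[t,t+T_{\textup{stop}}],\qquad
\va_i(\tau)=0\ \text{ for } \tau>t+T_{\textup{stop}},
\end{equation*}
where $T_{\textup{stop}}\coloneqq\|\vv_0\|/a_{\max}$.

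The first step is to check admissibility. We have $\|\va_i\|\le a_{\max}$ throughout. The velocity stays parallel to $\vv_0$ and its magnitude $\|\vv_0\|-a_{\max}(\tau-t)$ decreases monotonically to zero, so $\|\vv_i\|\le v_{\max}$ is preserved.

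The second step bounds the displacement. Integrating the dynamics~\eqref{eq:dyn} gives a straight-line motion along $\vv_0/\|\vv_0\|$ whose total length is
\begin{equation*}
\frac{\|\vv_0\|^2}{2a_{\max}}\le\frac{v_{\max}^2}{2a_{\max}}=d_{\textup{br}}.
\end{equation*}
After $t+T_{\textup{stop}}$ the robot is at rest and remains there. Hence $\|\vp_i(\tau)-\vp_0\|\le d_{\textup{br}}$ for all $\tau\ge t$.

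The third step applies the triangle inequality together with the detection hypothesis:
\begin{equation*}
\|\vp_i(\tau)-\vp^{\textup{obs}}\|\ge\|\vp_0-\vp^{\textup{obs}}\|-\|\vp_i(\tau)-\vp_0\|\ge r_{\textup{safe}}+d_{\textup{br}}-d_{\textup{br}}=r_{\textup{safe}}.
\end{equation*}
Therefore $h_{ij}^{\textup{obs}}\ge0$ for all future time. Since $r_{\textup{safe}}=r_{\textup{rob}}+r_{\textup{obs}}+\epsilon>r_{\textup{rob}}+r_{\textup{obs}}$, the robot disk does not intersect the obstacle's disk approximation, and by \autoref{ass:slam} the margin $\epsilon$ covers the remaining mapping error. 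This is collision avoidance in the sense of~\eqref{eq:safety_goal}.

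There is no real obstacle in this argument. The only subtle point is scope. The lemma asserts only the existence of an admissible open-loop input; it does not claim that the HOCBF-QP~\eqref{eq:cbf_qp} is feasible, so no forward-invariance theorem needs to be invoked here. The argument also treats a single obstacle and ignores other constraints, as the statement does. A further caveat is that the braking trajectory does not track the planned path. This is acceptable because the claim concerns only avoidance of the detected obstacle.
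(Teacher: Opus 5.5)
Your proposal is correct and matches the paper's proof. Both apply maximum deceleration opposite to the velocity, bound the stopping distance by $\|\vv_i(t)\|^2/(2a_{\max})\le d_{\textup{br}}$, and conclude that the clearance $r_{\textup{safe}}+d_{\textup{br}}$ keeps the robot outside the inflated disk. Your open-loop input along the fixed direction $-\vv_0/\|\vv_0\|$ produces the same straight-line trajectory as the paper's feedback law, and you also spell out the triangle-inequality step and the $\vv_0=0$ case, which the paper leaves implicit.
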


\begin{proof}
Apply maximum braking,
\[
\va_i(\tau)
=
-a_{\max}\frac{\vv_i(\tau)}{\|\vv_i(\tau)\|}
\]
whenever $\|\vv_i(\tau)\|>0$, and set $\va_i=\mathbf{0}$ once the robot stops. This input satisfies $\|\va_i\|\le a_{\max}$ and stops the robot within distance at most $\|\vv_i(t)\|^2/(2a_{\max})\le d_{\textup{br}}$. Therefore, an initial clearance of at least $r_{\textup{safe}}+d_{\textup{br}}$ ensures that the braking trajectory cannot enter the inflated obstacle disk of radius $r_{\textup{safe}}$.
\end{proof}

\begin{lemma}[Critical-point visibility implies timely detection]
\label{lem:critical_detection}
Suppose \autoref{ass:range} holds and robot $i$ satisfies $h_i^{\textup{vis}}(t)\ge0$ for all $t$. Then any hidden obstacle in $\calH$ that intersects the robot's immediate future swept path beyond the known-free boundary is detected before the robot reaches a state from which collision avoidance would require more than the worst-case braking distance.
\end{lemma}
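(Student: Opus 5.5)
The plan is to argue by tracking the critical point $\vp_i^{\textup{c}}(t)$ along the path and showing that, because $h_i^{\textup{vis}}\ge0$ is maintained, the point is inside the field of view and within sensing range while the remaining path distance to it still exceeds $d_{\textup{br}}$. Any hidden obstacle on the swept path beyond the known-free boundary must lie at or past the critical point. It can therefore be sensed no later than the first time the critical point (and its neighborhood) is observed. At that time, \autoref{lem:stop} will supply a feasible braking maneuver.

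\textbf{Geometry of the hidden obstacle.} Let $H\in\calH$ intersect the inflated swept path $\{\gamma_i(s)\}$ beyond $\partial\calB_i(t)$. Points of the path before $s_i^{\textup{c}}(t)$ lie in $\calB_i(t)$, which by \autoref{ass:slam} contains no obstacles up to margins. Hence the first contact of the inflated robot with $H$ along the path occurs at path parameter at least $s_i^{\textup{c}}(t)-r_{\textup{safe}}$. I will use this to relate the Euclidean clearance $\|\vp_i-\vp^{\textup{obs}}\|$ to $d_i^{\textup{c}}$, using that the Euclidean distance to the critical point is at most the arc length $d_i^{\textup{c}}$. This gives the direction needed for detection range.

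\textbf{Reaching the detection instant.} Since $h_i^{\textup{vis}}\ge0$ gives $d_i^{\textup{c}}\ge d_{\textup{br}}+\frac{v_{\max}}{\omega_{\max}}\Delta\theta_i^{\textup{c}}$, I split into two cases. If $\Delta\theta_i^{\textup{c}}=0$, the critical point is already inside the angular FoV. If $\Delta\theta_i^{\textup{c}}>0$, the maximal-rate rotation encoded in \eqref{eq:backup_gain_condition} and \eqref{eq:backup_horizon_condition} brings it into the FoV within $\Delta\theta_i^{\textup{c}}/\omega_{\max}$ seconds. During that interval the robot travels at most $v_{\max}\Delta\theta_i^{\textup{c}}/\omega_{\max}$, so at the instant of visibility the remaining path distance is still at least $d_{\textup{br}}$.

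At that instant I would then use \autoref{ass:range}, $l_{\textup{range}}\ge r_{\textup{safe}}+d_{\textup{br}}$, to argue that the region of the path through which the robot can still move before stopping lies within range. Any hidden obstacle whose inflated disk meets that region is either inside $\calF_i$ and detected (occlusion can only come from an obstacle that is itself detected first), or lies farther than $r_{\textup{safe}}+d_{\textup{br}}$. In the second case it is not yet urgent. As the robot advances, the same argument reapplies with the updated critical point, since $h_i^{\textup{vis}}\ge0$ holds for all $t$. Therefore the first detection occurs with clearance at least $r_{\textup{safe}}+d_{\textup{br}}$, which is exactly the hypothesis of \autoref{lem:stop}.

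\textbf{Main obstacle.} The hardest step is the continuity/induction over time. The critical point and $\calB_i(t)$ change discontinuously as the map updates, and the FoV is a sector, not a full disk. I must therefore show there is no ``gap'' time at which an obstacle jumps from undetected to closer than $r_{\textup{safe}}+d_{\textup{br}}$.

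I would handle this with a first-violation-time argument. Suppose $t^\ast$ is the first time an undetected obstacle on the path is within $r_{\textup{safe}}+d_{\textup{br}}$. Just before $t^\ast$, $h_i^{\textup{vis}}\ge0$ forces the critical point to lie no closer than $d_{\textup{br}}$ along the path, so that obstacle point is in $\calB_i$, a contradiction, or else it is beyond the critical point. In the latter case the angular term forces it to be in the sensed sector by $t^\ast$. I expect to have to interpret the statement's ``immediate future swept path'' loosely, restricting to the path direction covered by the FoV sector, to close this step.
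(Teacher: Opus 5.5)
Your proposal follows the same route as the paper's proof: the critical point is the first exit of the planned path from $\calB_i(t)$, the margin $h_i^{\textup{vis}}\ge0$ reserves enough path distance to rotate it into the FoV while retaining $d_{\textup{br}}$, and \autoref{ass:range} then gives the clearance required by \autoref{lem:stop}. The paper uses only the hypothesis $h_i^{\textup{vis}}\ge0$ for the rotation step, not \eqref{eq:backup_gain_condition}--\eqref{eq:backup_horizon_condition}, which steer toward $\angle(\vv_i)$ rather than toward $\beta_i^{\textup{c}}$ anyway; its proof is only this three-step sketch, and it skips your first-violation-time argument (which indeed does not close as written, since $\Delta\theta_i^{\textup{c}}$ constrains only the bearing of the critical point) by restricting to obstacles ``at or immediately beyond'' the boundary, the same loose reading you end up conceding.
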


\begin{proof}
By \autoref{def:critical_point}, $\vp_i^{\textup{c}}(t)$ is the first point along the planned path where the robot exits the known-free region. The condition $h_i^{\textup{vis}}(t)\ge0$ guarantees that the robot has enough path distance to rotate this boundary point into the FoV while still retaining distance $d_{\textup{br}}$. Since $l_{\textup{range}}\ge r_{\textup{safe}}+d_{\textup{br}}$, any obstacle intersecting the future swept path at or immediately beyond this boundary is sensed with at least the clearance required by \autoref{lem:stop}.
\end{proof}

\begin{theorem}[Collision avoidance under limited sensing]
\label{thm:safety}
Under Assumptions~\ref{ass:slam} and~\ref{ass:range}, suppose each robot is
initialized in the visibility-safe set, i.e.,
$\vx_i(t_0)\in\calC_i^{\textup{vis}}(t_0)$, and executes the attitude filter
in \autoref{alg:seamlis_gatekeeper} with the backup yaw policy satisfying
\eqref{eq:backup_gain_condition} and the backup horizon satisfying
\eqref{eq:backup_horizon_condition}. Suppose also that the positional CBF
filter in \autoref{subsec:seamlis_positional_cbf} is applied, that
$\calJ_i(t)$ is updated whenever robot $i$ detects a previously hidden
obstacle, and that $K_i^{\textup{cbf}}(t)$ is nonempty for all $t\ge t_0$.
Then the robots avoid collisions with initially known obstacles, previously
unknown obstacles, and other robots for all time.
\end{theorem}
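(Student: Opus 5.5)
The proof splits into three parts: invariance of the visibility constraint under the attitude filter, forward invariance of the positional HOCBF sets, and a timely-detection argument for hidden obstacles that ties the two together.

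\emph{Step 1: the visibility constraint holds for all time.} We argue by induction over the update times $t_k$ that $\vx_i(t_k)\in\calC_i^{\textup{vis}}(t_k)$. The base case $k=0$ is the initialization hypothesis. For the inductive step, the search set in~\eqref{eq:seamlis_ts_star} is nonempty whenever the update state is visibility-safe. This follows from the backup-policy construction: \eqref{eq:backup_gain_condition} makes the backup yaw saturate at $\omega_{\max}$ whenever the motion direction is outside the FoV, and \eqref{eq:backup_horizon_condition} lets the rotation finish within $T_B$. Hence $T_S=0$ is valid, and Algorithm~\ref{alg:seamlis_gatekeeper} always applies either the nominal yaw with a certified positive $T_S^\star$ or the valid backup yaw. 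In either case the executed yaw lies on a valid candidate rollout, so $\hat h_i^{\textup{vis}}\ge0$ on $[t_k,t_{k+1}]$ as long as $\Delta t\le T_S^\star$ or the backup is applied.

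Two further facts are needed to pass to $t_{k+1}$. First, the conservative translational prediction has to bound the true motion, which we take from the speed bound $v_{\max}$ used in the rollout. Second, the map update must not decrease $h_i^{\textup{vis}}$. The known-free set $\calB_i$ only grows, so the critical point can only move farther along the path, and $d_i^{\textup{c}}$ does not decrease. Replanning, however, changes $\calP_i$, so we must assume, or argue from the \texttt{gatekeeper} commitment, that the previously certified backup remains valid. I expect this to be the main obstacle. I would handle it by adopting the standard \texttt{gatekeeper} committed-trajectory argument: if re-validation fails at $t_{k+1}$, fall back on the previously committed candidate, which is still valid by time shift. On the grid $\mathcal{T}_H$ with $T_S^\star\ge\Delta t$, this yields $h_i^{\textup{vis}}(t)\ge0$ for all $t\ge t_0$.

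\emph{Step 2: forward invariance of the positional constraints.} Since $K_i^{\textup{cbf}}(t)$ is assumed nonempty, the QP~\eqref{eq:cbf_qp} returns an input satisfying every $\psi_{ij}^{\textup{obs}}\ge0$ and $\psi_{i\ell}^{\textup{ag}}\ge0$.
\begin{itemize}
\item For initially known obstacles, we apply Theorem~\ref{thm:sf_hocbf_forward_invariance} with $r=2$, so $h_{ij}^{\textup{obs}}\ge0$ for all time. This requires the initial state to lie in both HOCBF sets, which we include as part of the well-posed initialization.
\item For inter-agent separation, summing the allocated constraints~\eqref{eq:hocbf_agent} of robots $i$ and $\ell$ gives the full pairwise relative-degree-two HOCBF condition. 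Theorem~\ref{thm:sf_hocbf_forward_invariance} then yields $h_{i\ell}^{\textup{ag}}\ge0$, and since $r_{\textup{ag}}>2r_{\textup{rob}}$ the robots' disks do not intersect.
\end{itemize}

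\emph{Step 3: hidden obstacles.} Step 1 gives $h_i^{\textup{vis}}\ge0$ for all $t$, so Lemma~\ref{lem:critical_detection} together with Assumption~\ref{ass:range} implies that any hidden obstacle the robot could reach is detected at clearance at least $r_{\textup{safe}}+d_{\textup{br}}$. Lemma~\ref{lem:stop} then shows that a collision-free admissible continuation exists at the detection time. Once the obstacle is added to $\calJ_i(t)$, the detection state must also satisfy $h_{ij}^{\textup{obs}}\ge0$ and the first-order HOCBF condition, so that Theorem~\ref{thm:sf_hocbf_forward_invariance} applies from that time onward. The zeroth-order condition holds because the clearance exceeds $r_{\textup{safe}}$. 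The first-order condition $\dot h+\alpha_1 h\ge0$ I would obtain by using the braking margin and choosing $\alpha_1$ large relative to $v_{\max}/d_{\textup{br}}$. Alternatively, I would absorb it into the nonemptiness hypothesis on $K_i^{\textup{cbf}}$ together with the existence of the braking input from Lemma~\ref{lem:stop}. Invariance from detection onward then holds for each newly detected obstacle.

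\emph{Conclusion.} Obstacles that do not lie on the swept path are never approached before they are sensed, because the robot follows the known-free region up to the critical point. Combining the three cases, \eqref{eq:safety_goal} holds for every robot for all $t\ge t_0$.
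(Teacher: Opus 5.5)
Your three-part decomposition is the paper's proof. It consists of HOCBF forward invariance (Theorem~\ref{thm:sf_hocbf_forward_invariance}) for known obstacles and neighboring robots, recursive feasibility of the attitude \texttt{gatekeeper} from validity of $T_S=0$ plus induction over $t_k$, and Lemma~\ref{lem:critical_detection} $\Rightarrow$ Lemma~\ref{lem:stop} $\Rightarrow$ insertion into $\calJ_i(t)$ for hidden obstacles. The paper closes Step~1 in one line: the applied yaw is the first segment of a valid rollout, so the state stays in $\calC_i^{\textup{vis}}$. It never distinguishes $\calC_i^{\textup{vis}}(t_k)$ from $\calC_i^{\textup{vis}}(t_{k+1})$, so the transition issues you raise are real. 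Your patches, however, do not close them.

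\textbf{Map growth.} Monotonicity of $s_i^{\textup{c}}$ under growth of $\calB_i$ holds for a fixed path starting in the interior of $\calB_i$, but it does not make $h_i^{\textup{vis}}$ monotone. The critical point jumps to a later point of the path whose bearing may lie far outside the FoV (e.g.\ a path that bends back alongside the robot through previously sensed space). So $\frac{v_{\max}}{\omega_{\max}}\Delta\theta_i^{\textup{c}}$ can increase by more than $d_i^{\textup{c}}$ does.

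\textbf{Replanning.} The committed-candidate fallback is not part of Algorithm~\ref{alg:seamlis_gatekeeper}, and even if added it commits only the yaw. The old candidate's translation is a prediction along $\calP_i(t_k)$, whereas the positional controller tracks $\calP_i(t_{k+1})$ and the CBF-QP can leave any path; the bound $v_{\max}$ limits how far the robot moves, not where. A replan that puts the new critical point within path distance $d_{\textup{br}}$ gives $h_i^{\textup{vis}}<0$ for every yaw. At that point validity of $T_S=0$, which holds only from visibility-safe states, is lost.

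\textbf{Validity of $T_S=0$.} Even this claim, which you share with the paper, is not secured by \eqref{eq:backup_gain_condition}--\eqref{eq:backup_horizon_condition}. The backup turns toward $\angle\vv_i$, not toward $\beta_i^{\textup{c}}$. Moreover, with $\calB_i(t_k)$ frozen, the predicted $\hat d_i^{\textup{c}}$ keeps shrinking after $\Delta\hat\theta_i^{\textup{c}}$ reaches zero. So from a moving state with small margin, $\hat h_i^{\textup{vis}}$ becomes negative before $\tau=T_B$.

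The only mechanism that restores the margin is the map update pushing $\vp_i^{\textup{c}}$ forward once it is seen. The map update is therefore not merely something that must not decrease $h_i^{\textup{vis}}$. A rigorous Step~1 must either track $h_i^{\textup{vis}}$ along the true closed loop with the evolving map, or state planner consistency, the prediction model, and this map-update property as explicit hypotheses. That applies to your proof and to the paper's.

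In Step~3, your first option is sound as an added gain condition. At clearance $D\ge r_{\textup{safe}}+d_{\textup{br}}$ one has $h_{ij}^{\textup{obs}}\ge d_{\textup{br}}(2r_{\textup{safe}}+d_{\textup{br}})$ and $\dot h_{ij}^{\textup{obs}}\ge-2Dv_{\max}$. Since $D/(D^2-r_{\textup{safe}}^2)$ decreases in $D$, a first-stage gain $p_1\ge 2v_{\max}/d_{\textup{br}}$ places the detection state in $\{\dot h_{ij}^{\textup{obs}}+p_1h_{ij}^{\textup{obs}}\ge0\}$. For the combined form \eqref{eq:sf_second_order_hocbf}, $p_1$ is a root of $\lambda^2-\alpha_1\lambda+\alpha_0$, not $\alpha_1$ itself.

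Your alternative fails. Nonemptiness of $K_i^{\textup{cbf}}(t)$ only says $\psi_{ij}^{\textup{obs}}\ge0$ can be imposed. If $\dot h_{ij}^{\textup{obs}}+p_1h_{ij}^{\textup{obs}}<0$ at detection, imposing it only keeps that quantity above a negative, exponentially decaying bound, which does not stop $h_{ij}^{\textup{obs}}$ from crossing zero. The braking input of Lemma~\ref{lem:stop} is also not what \eqref{eq:cbf_qp} returns.

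Likewise, your closing claim that the robot stays in the known-free region up to $\vp_i^{\textup{c}}$ does not follow. Tracking error and the positional filter move $\vp_i$ off $\calP_i$, while Lemma~\ref{lem:critical_detection} only concerns the planned swept path. The paper's proof is silent on these points too, so they must enter as hypotheses rather than be derived from the stated assumptions.
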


\begin{proof}
For initially known obstacles and neighboring robots, the positional controller selects $\va_i(t)\in K_i^{\textup{cbf}}(t)$. Therefore, the obstacle and inter-agent safe sets defined by~\eqref{eq:hcol} and~\eqref{eq:hag} are forward invariant by the HOCBF condition in \autoref{thm:sf_hocbf_forward_invariance}.

For previously unknown obstacles, the attitude filter maintains
$h_i^{\textup{vis}}(t)\ge0$. At any visibility-safe update state, the
candidate $T_S=0$ is valid by the backup-policy construction above. Hence,
the \texttt{gatekeeper} search set is nonempty. If a positive switching time
is certified, the filter applies the nominal yaw command for the current
control interval; otherwise, it applies the backup yaw command. In both
cases, the selected yaw command is the first segment of a valid candidate
rollout, and therefore the predicted state remains in
$\calC_i^{\textup{vis}}$. Since the robots are initialized in
$\calC_i^{\textup{vis}}$, induction over the attitude-filter update times
gives $h_i^{\textup{vis}}(t)\ge0$ for all $t\ge t_0$.

By \autoref{lem:critical_detection}, any hidden obstacle on the immediate
future swept path is detected with sufficient clearance. By
\autoref{lem:stop}, there then exists an admissible acceleration input that
avoids collision. Once the obstacle is detected, it is inserted into
$\calJ_i(t)$, and the corresponding CBF constraint~\eqref{eq:hocbf_obs} is
enforced by the positional safety filter. Repeating this argument at each
detection event proves collision avoidance for all time.
\end{proof}

\begin{remark}
The safety argument depends on local sensing, bounded dynamics, recursive feasibility of the attitude safety filter, and nonemptiness of the positional CBF-admissible set. It does not depend on the specific upstream goal assignment or path-planning algorithm. Therefore, \texttt{SEAMLiS} can be combined with frontier-based methods~\cite{yamauchi_frontierbased_1997}, decentralized adaptations of optimization-based task-assignment objectives such as CoScan~\cite{dong_multirobot_2019}, or learning-based exploration policies~\cite{chaplot_learning_2020,calzolari_reinforcement_2025}, including asynchronous decentralized systems that do not share full occupancy maps.
\end{remark}

\section{SIMULATION RESULTS}
\label{sec:results}

\subsection{Experimental Setup}
\label{subsec:sim_setup}

We evaluate \texttt{SEAMLiS} in simulation on decentralized exploration and mapping tasks under limited sensing, using the robot dynamics~\eqref{eq:dyn} and sensing model~\eqref{eq:fov_footprint}. We consider teams of $R\in\{1,2,3\}$ robots. Unless otherwise stated, the sensing and actuation limits are fixed to
$\theta_{\textup{fov}}=70^\circ$,
$l_{\textup{range}}=4.5$ m,
$v_{\max}=1.5$ m/s,
$a_{\max}=2.0$ m/s$^2$,
and $r_{\textup{rob}}=0.15$ m. All controllers and planners run in discrete time with sampling time $\Delta t=0.1$ s.

\paragraph{Randomized environments with known and unknown obstacles}
Each trial instantiates a random environment by sampling the initially known obstacle set $\calO_0$ and the initially unknown obstacle set $\calH$. The number of known obstacles is sampled uniformly from $[10,15]$, and the number of hidden obstacles is sampled uniformly from $[5,25]$. Obstacle locations and radii are sampled randomly. In addition to circular obstacles, we include rectangular obstacles represented by superellipsoids in $\calO_0$. All methods are evaluated on the same base workspace, occupancy-grid resolution, obstacle samples, and initial robot states within each trial.

\begin{figure*}[t]
    \centering
    \includegraphics[width=0.98\textwidth]{Figures/seamlis.pdf}
    \caption{Representative time progressions of \texttt{SEAMLiS} for one-, two-, and three-robot exploration. Rows (a)-(c) correspond to $R=1$, $R=2$, and $R=3$, respectively, and the columns within each row show successive time instants. This example uses the decentralized CoScan-inspired allocator described in Sec.~\ref{subsec:sim_setup}. The gray obstacles denote the initially known obstacle set $\calO_0$. The light orange obstacles denote the initially unknown obstacle set $\calH$ that has not yet been detected, while the darker orange obstacles denote obstacles that have been detected by at least one robot. Although this visualization shows when an obstacle has been detected, the detected obstacle information is not shared across agents during execution, so each robot maintains its own local detected set. The colored curves show robot trajectories, the dashed sectors show finite-FoV sensing footprints $\calF_i(t)$, and the shaded regions show accumulated known-free regions $\calB_i(t)$. The yellow points denote frontier points, and the green points denote the set of A* waypoints along the current local plan. The snapshots illustrate that \texttt{SEAMLiS} continues to expand coverage while maintaining motion-relevant visibility near unknown regions.}
    \label{fig:exploration_progression}
\end{figure*}

\begin{figure}[t]
    \centering
    \includegraphics[width=0.98\linewidth]{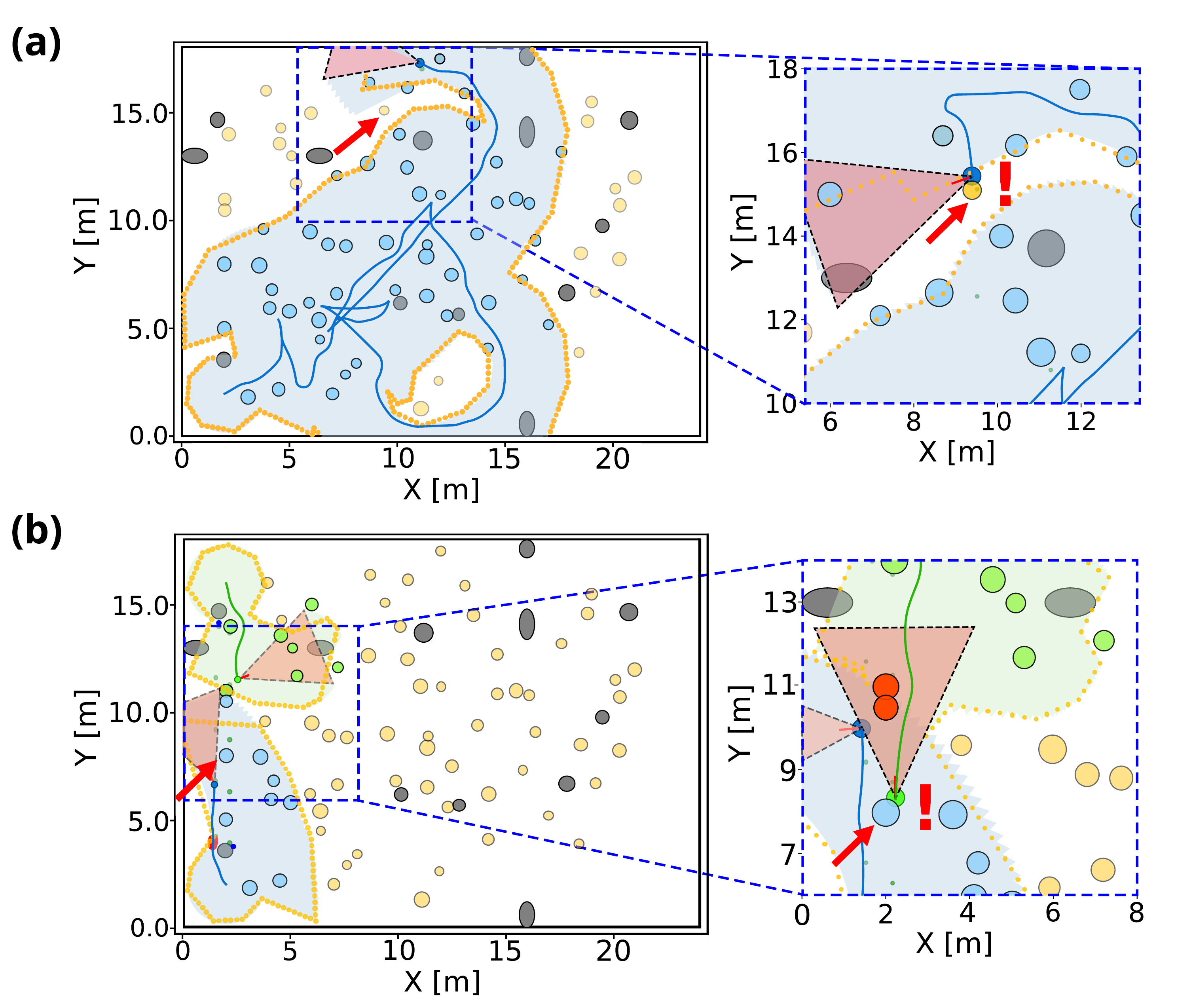}
    \caption{
    Representative failure cases of baseline attitude policies under limited
    sensing. These qualitative failure snapshots use $\theta_{\textup{fov}}=50^\circ$, while the quantitative benchmark in Table~\ref{tab:sim_benchmark} uses the default setting $\theta_{\textup{fov}}=70^\circ$. In each row, the left panel shows the exploration workspace shortly
    before collision, and the right panel shows a zoomed-in snapshot at collision.
    Gray obstacles are initially known obstacles. Light orange obstacles are
    hidden obstacles that have not yet been detected. When a hidden obstacle is
    detected, it is visualized using the color of the robot that detected it;
    dark orange indicates hidden obstacles detected by all robots. These colors
    are used only for visualization, and detected obstacle information is not
    shared as a full map during execution. 
    (a) Under the visibility-promoting (VP) policy with one robot, the robot
    prioritizes information gain and collides with a hidden obstacle that is not
    observed early enough to preserve a feasible avoidance maneuver.
    (b) Under constant yaw-rate in a two-robot setting, the green robot moves toward
    a region where the blue robot has more local information and collides with
    an obstacle detected by the blue robot but absent from the green robot's
    local map. These cases illustrate that, without full-map sharing,
    perception-limited failures become more pronounced in decentralized
    exploration.
    }
    \label{fig:baseline_failures}
\end{figure}

\paragraph{Fully decentralized and asynchronous execution}
A key aspect of the evaluation is that there is no centralized computation or decision making. Each robot runs the full exploration stack onboard and independently:
\begin{enumerate}[label=\roman*)]
    \item \textbf{Goal assignment.} We test two decentralized goal-assignment modules: (a) frontier-based exploration~\cite{yamauchi_frontierbased_1997} and (b) a decentralized CoScan-inspired allocator based on CoScan's OMT-style task-assignment objective~\cite{dong_multirobot_2019, ye_multirobot_2022}. The latter is not the original centralized CoScan algorithm. Instead, each robot performs goal assignment locally using its own map and asynchronously received frontier/task candidates. Nearby robots exchange only selected frontier/task information and poses; they do not exchange full occupancy grids, distance fields, obstacle lists, planned paths, or control inputs. Each robot replans on its own local timer, and one robot's local assignment computation does not overwrite or update the goals selected by neighboring robots.
    \item \textbf{Communication.} Robots share only their current poses and frontier information. They \textbf{\emph{do not}} exchange occupancy grids, distance fields, obstacle lists, planned paths, or control inputs. Messages are sent and received asynchronously, and each robot replans using its own local timer and the latest received messages.
    \item \textbf{Local path planning.} If an assigned frontier lies beyond the sensing range, the robot computes a waypoint path using A*~\cite{hart_formal_1968}. Collision-free planning is defined with respect to the robot's currently known obstacle set $\calO_i(t)$; unknown space $\calU_i(t)$ is treated optimistically as traversable.
    \item \textbf{Local control.} Each robot tracks the planned path using the positional safety controller together with one of the attitude-control configurations described below.
\end{enumerate}

\begin{table}[t]
\centering
\caption{Decentralized multi-robot exploration benchmark under limited sensing, averaged over 100 trials per setting. We report (a) Success: success rate, higher is better; (b) Collision: collision rate, lower is better; (c) Violation: mean number of unknown-region overlap violations per robot, lower is better; and (d) Time: exploration time over successful trials only, lower is better. D-CoScan denotes our decentralized CoScan-inspired allocator, which retains CoScan's OMT-style assignment objective but uses local asynchronous information sharing.}
\label{tab:sim_benchmark}
\scriptsize
\setlength{\tabcolsep}{4.2pt}
\renewcommand{\arraystretch}{1.15}
\begin{tabular}{@{} c c l c c c c @{}}
\toprule
 & $R$ & Att. Policy
& Success $\uparrow$
& Collision $\downarrow$
& Violation $\downarrow$
& Time [s] $\downarrow$ \\
\midrule
\multirow{12}{*}{\makecell[c]{Frontier\\-based\\exploration \\ \cite{yamauchi_frontierbased_1997}}}
& \multirow{4}{*}{1}
& Constant & 0.36 & \textbf{0.00} & 0.31 & 155.70 \\
& & VP & \textbf{1.00} & \textbf{0.00} & 2.27 & \textbf{108.63} \\
& & Vel.\ Track. & 0.00 & \textbf{0.00} & \textbf{0.00} & N/A \\
& & \texttt{SEAMLiS} & \textbf{1.00} & \textbf{0.00} & \textbf{0.00} & 140.15 \\
\cmidrule(lr){2-7}
& \multirow{4}{*}{2}
& Constant & 0.00 & 1.00 & 6.00 & N/A \\
& & VP & 0.53 & 0.47 & 2.83 & \textbf{58.50} \\
& & Vel.\ Track. & 0.00 & \textbf{0.00} & \textbf{0.00} & N/A \\
& & \texttt{SEAMLiS} & \textbf{1.00} & \textbf{0.00} & 1.97 & 59.55 \\
\cmidrule(lr){2-7}
& \multirow{4}{*}{3}
& Constant & 0.00 & 1.00 & 2.33 & N/A \\
& & VP & 0.00 & 1.00 & 2.33 & N/A \\
& & Vel.\ Track. & 0.00 & \textbf{0.00} & \textbf{0.00} & N/A \\
& & \texttt{SEAMLiS} & \textbf{1.00} & \textbf{0.00} & 4.33 & 48.50 \\
\midrule
\multirow{12}{*}{\makecell[c]{D-CoScan\\ inspired\\ by \cite{dong_multirobot_2019}}}
& \multirow{4}{*}{1}
& Constant & 0.69 & 0.31 & 64.76 & 143.51 \\
& & VP & \textbf{1.00} & \textbf{0.00} & 0.33 & 103.23 \\
& & Vel.\ Track. & 0.00 & \textbf{0.00} & \textbf{0.00} & N/A \\
& & \texttt{SEAMLiS} & \textbf{1.00} & \textbf{0.00} & 0.36 & \textbf{109.40} \\
\cmidrule(lr){2-7}
& \multirow{4}{*}{2}
& Constant & 0.00 & 0.37 & 54.10 & N/A \\
& & VP & 0.63 & 0.37 & 1.81 & 51.00 \\
& & Vel.\ Track. & 0.00 & \textbf{0.00} & \textbf{0.00} & N/A \\
& & \texttt{SEAMLiS} & \textbf{1.00} & \textbf{0.00} & 2.19 & \textbf{50.88} \\
\cmidrule(lr){2-7}
& \multirow{4}{*}{3}
& Constant & 0.00 & 1.00 & 16.49 & N/A \\
& & VP & 0.59 & 0.41 & 3.51 & \textbf{38.10} \\
& & Vel.\ Track. & 0.00 & \textbf{0.00} & \textbf{0.00} & N/A \\
& & \texttt{SEAMLiS} & \textbf{1.00} & \textbf{0.00} & 2.15 & 43.08 \\
\bottomrule
\end{tabular}
\end{table}

\paragraph{Controllers}
The positional controller is common to all methods. We use a \ac{MPC}-\ac{CBF} controller~\cite{zeng_safetycritical_2021} to track the current waypoint or goal while enforcing CBF constraints for known obstacles, newly detected obstacles, and inter-agent avoidance. The MPC cost is tuned to reach goals efficiently in free space, while the CBF parameters are tuned to avoid collisions smoothly once obstacles are detected with sufficient clearance.

\begin{figure*}[t]
    \centering
    \includegraphics[width=0.8\textwidth]{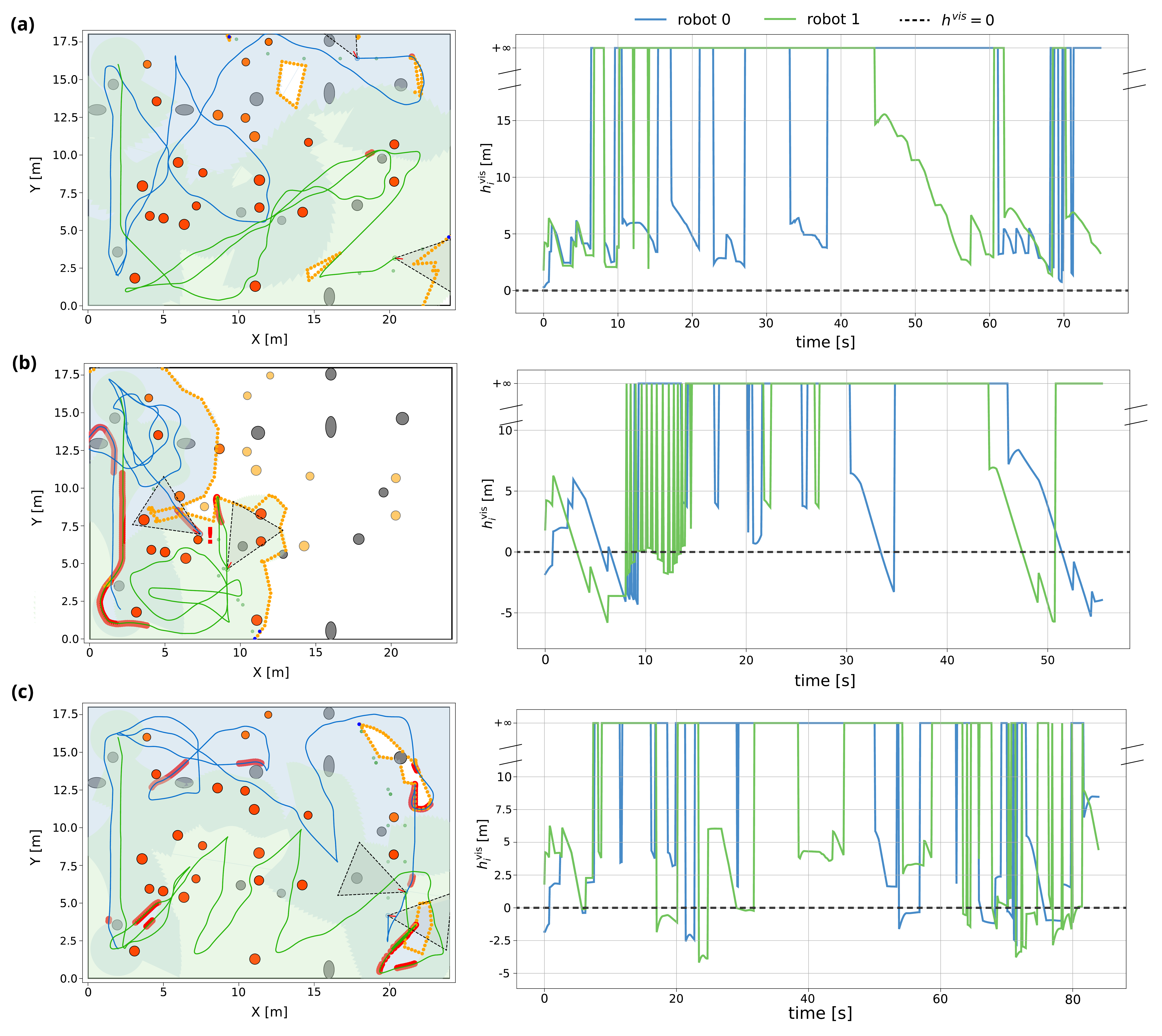}
    \caption{
    Visibility-margin evolution in a representative two-robot benchmark trial.
    Rows (a)-(c) use the same randomized environment and initial states, but with different attitude policies:
    (a) the \texttt{SEAMLiS} \texttt{gatekeeper}, (b) visibility-promoting~(VP), and (c) constant yaw-rate.
    The left panels show the final exploration snapshots, using the same visualization convention as Fig.~\ref{fig:exploration_progression}.
    The right panels show the visibility constraint value $h_i^{\textup{vis}}(t)$ for the two robots during the rollout; the dashed black line marks $h_i^{\textup{vis}}=0$.
    Positive values indicate that the critical known-free/unknown boundary can be brought into the FoV while retaining the braking-distance margin in~\eqref{eq:hvis}, whereas negative values indicate violation of the visibility-safety condition.
    Segments plotted at $+\infty$ correspond to the inactive case $d_i^{\textup{c}}=+\infty$, where no critical point appears along the current look-ahead path.
    The \texttt{SEAMLiS} rollout keeps both robots at nonnegative visibility margin, while VP and constant yaw-rate repeatedly allow $h_i^{\textup{vis}}<0$; in the VP case, the final snapshot shows collision with an initially unknown obstacle.
    }
    \label{fig:hvis_rollout}
\end{figure*}

\begin{figure*}[t]
    \centering
    \includegraphics[width=0.98\textwidth]{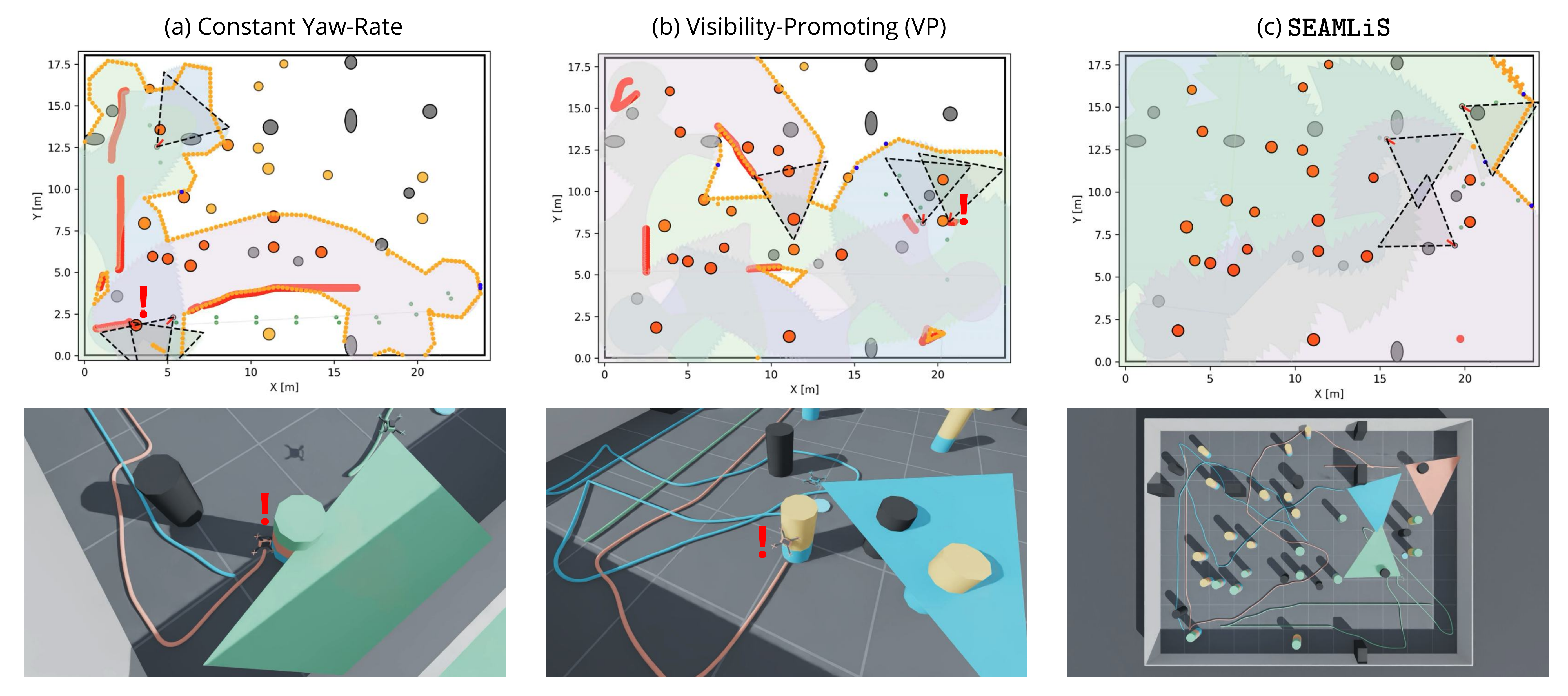}
    \caption{Representative Isaac Sim rollouts for frontier-based exploration with three robots. Initially unknown obstacles are rendered in light orange, and robot-colored obstacle regions indicate local detections by the corresponding robots. (a) Constant yaw-rate: the red robot collides with an obstacle after late detection. (b) Visibility-promoting (VP): the red robot collides with an initially unknown obstacle before detection. (c) \texttt{SEAMLiS}: the team safely explores and builds the map without any safety violation.}
    \label{fig:isaac_results}
\end{figure*}

\paragraph{Attitude-control baselines}
We compare four attitude configurations:
\begin{enumerate}[label=\roman*)]
    \item \textbf{Constant yaw-rate.} The robot executes a constant yaw-rate command $\omega_i=\omega_{\textup{const}}$.
    \item \textbf{Visibility-promoting (VP).} The robot always executes the nominal visibility-promoting policy $\pi_\theta^{\textup{vis}}$.
    \item \textbf{Velocity tracking.} The robot always executes the backup velocity-tracking yaw policy in~\eqref{eq:yaw_backup}.
    \item \textbf{\texttt{SEAMLiS}.} The robot uses the attitude \texttt{gatekeeper} filter in \autoref{alg:seamlis_gatekeeper}, with $\pi_\theta^{\textup{vis}}$ as the nominal policy and velocity tracking as the backup policy.
\end{enumerate}
All \texttt{gatekeeper} hyperparameters are fixed across trials.

\begin{figure*}[t]
    \centering
    \includegraphics[width=0.98\textwidth]{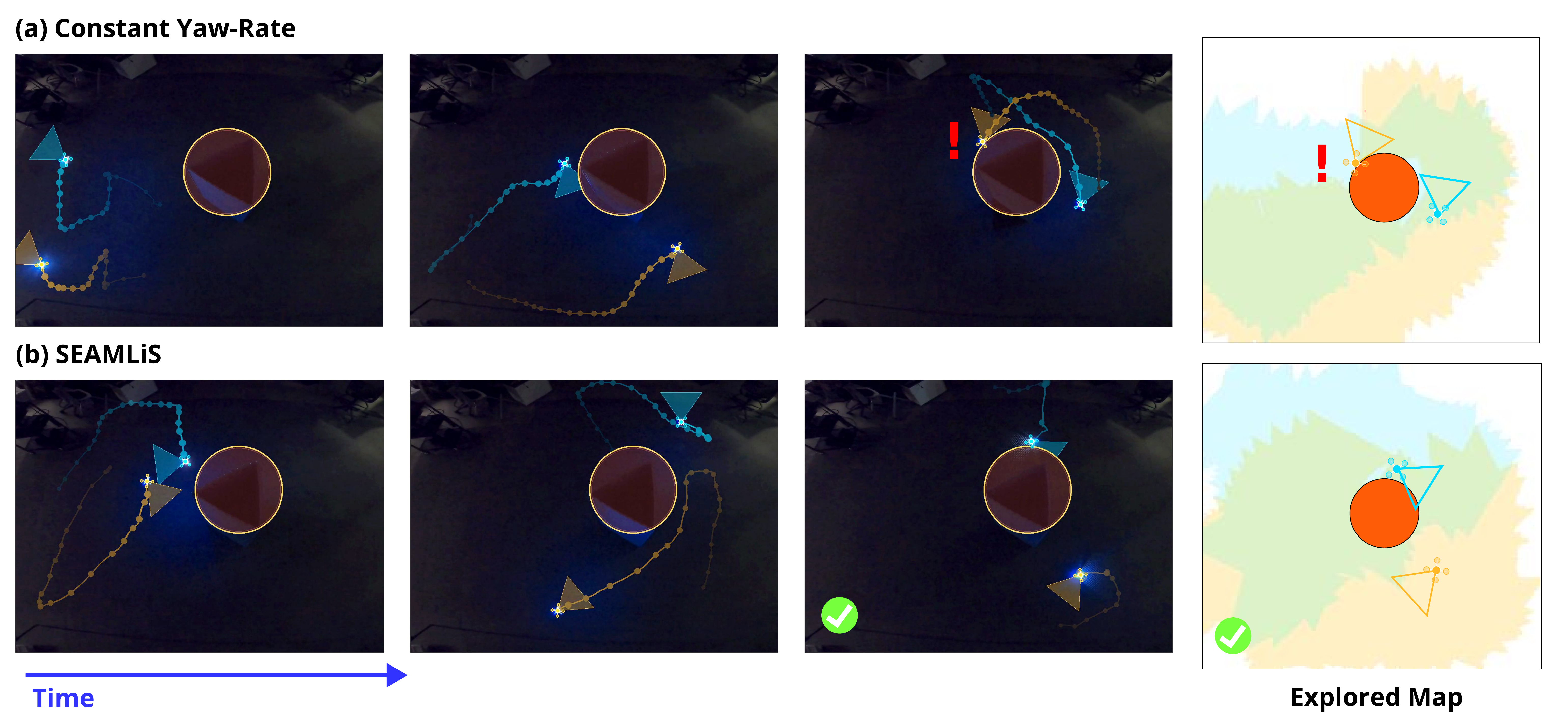}
    \caption{Hardware experiments with two Crazyflie quadrotors in a Vicon-tracked workspace. 
    Rows compare (a) the constant yaw-rate baseline and (b) \texttt{SEAMLiS}. 
    For each method, the first three columns show selected top-down video snapshots from the physical flight over time, and the rightmost column shows a post-processed exploration-map snapshot reconstructed from the corresponding ROS~2 bag logs. 
    The circular obstacle at the center is initially unknown to both robots and is inserted into each robot's local map only after it enters that robot's modeled finite-FoV. 
    Under constant yaw-rate, one robot collides with the obstacle even though the other robot has already locally detected it. 
    Under \texttt{SEAMLiS}, the attitude \texttt{gatekeeper} maintains motion-relevant visibility, allowing the MPC-CBF controller to avoid the obstacle.}
    \label{fig:hardware_results}
\end{figure*}

\paragraph{Metrics}
Each configuration is evaluated over $100$ randomized trials. We set the evaluation timeout to $160\,\mathrm{s}$, $80\,\mathrm{s}$, and $60\,\mathrm{s}$ for $R=1$, $R=2$, and $R=3$, respectively. We report: (i) \textbf{success rate}, the fraction of trials in which the robot team reaches $98\%$ free-space coverage before the timeout without collision; (ii) \textbf{collision rate}, the fraction of trials that terminate due to collision; (iii) \textbf{mean number of unknown-region overlap violations per robot}, where a violation is counted whenever $\operatorname{Robot}(\vp_i(t)) \cap \calU_i(t) \neq \emptyset$; and (iv) \textbf{exploration time}, averaged over successful trials only. Success and collision rates do not necessarily sum to one because some trials time out without collision.

\paragraph{Implementation and compute}
All simulations are implemented in Python and run on an Apple MacBook with an M4 CPU. The exploration environment is a $24\,\mathrm{m} \times 18\,\mathrm{m}$ workspace, and success is declared when the accumulated sensed free-space area reaches $98\%$ of the true free space without collision. Frontier extraction and coverage computation use geometric set operations on the accumulated FoV footprints. Local paths are planned with A* over each robot's current map, treating known obstacles as occupied and unknown cells optimistically as collision free.

The positional controller is shared across all attitude policies. We use a finite-horizon MPC-CBF controller with horizon $10$ steps and linear class-$\calK$ functions $\alpha_{0}=\alpha_{1}=0.55$, enforcing CBF constraints for the nearby known obstacles, locally detected hidden obstacles, and neighboring robots modeled with their communicated pose and estimated velocity. Hidden obstacles are inserted only into the detecting robot's local constraint set; obstacle lists or occupancy grids are not shared across robots. For \texttt{SEAMLiS}, the attitude \texttt{gatekeeper} uses the visibility-promoting yaw policy as the nominal controller and velocity-tracking yaw as the backup controller, with nominal horizon $1.3$ s, backup horizon $1.2$ s, and horizon decrement $0.05$ s. Batch trials are deterministic from a master random seed and are parallelized across independent CPU workers. The code and experiment scripts are available online.\footnote{Public repository: \href{https://github.com/tkkim-robot/seamlis}{https://github.com/tkkim-robot/seamlis}}

\subsection{Results}\label{subsec:sim_results}

\autoref{tab:sim_benchmark} summarizes the results for both goal assignment modules, across $R\in\{1,2,3\}$ robots and the four attitude policies. Figure~\ref{fig:exploration_progression} shows representative snapshots of \texttt{SEAMLiS} executions in the one-, two-, and three-robot settings. These snapshots visualize robot trajectories, finite-FoV sensing footprints, frontier points, local A* waypoints, accumulated known-free regions, and hidden-obstacle detections over time. Representative baseline failure cases are shown in Fig.~\ref{fig:baseline_failures}, which provides qualitative evidence for the collision trends reported in Table~\ref{tab:sim_benchmark}.

\textit{Safety:}
Across all tested team sizes and both goal-assignment modules, \texttt{SEAMLiS} succeeds in all reported trials with zero collisions. In contrast, the constant yaw-rate and VP baselines exhibit collisions in several multi-robot settings; for example, constant yaw-rate reaches a collision rate of $1.00$ for frontier-based exploration with $R=2,3$ and for the CoScan-inspired allocator with $R=3$, while VP reaches collision rates of $0.47$ and $1.00$ for frontier-based exploration with $R=2$ and $R=3$, respectively. The velocity-tracking baseline is also collision-free in all reported settings, but it never completes the exploration task within the robot-count-dependent timeout. These results show that motion-aligned sensing alone is safe but overly conservative, whereas \texttt{SEAMLiS} combines collision avoidance with timely exploration completion.

The representative failures in Fig.~\ref{fig:baseline_failures} show two mechanisms behind these collisions. In Fig.~\ref{fig:baseline_failures}(a), the VP policy rotates the sensor toward information-rich regions rather than maintaining sufficient visibility along the direction of motion, and the robot collides with a hidden obstacle after insufficient look-ahead. In Fig.~\ref{fig:baseline_failures}(b), the green robot moves into a region where the blue robot has already detected an obstacle; however, because the robots do not share full occupancy maps or obstacle lists, the obstacle is absent from the green robot's local CBF constraint set. This illustrates why limited sensing and limited information sharing jointly make perception-limited safety failures more prominent in decentralized exploration.

Figure~\ref{fig:hvis_rollout} further illustrates the mechanism behind these safety outcomes by directly plotting the visibility margin $h_i^{\textup{vis}}(t)$ in a representative two-robot rollout. With the \texttt{SEAMLiS}' \texttt{gatekeeper}, both robots remain in the visibility-safe region $h_i^{\textup{vis}}\ge0$, whereas the VP and constant yaw-rate policies repeatedly produce negative margins. These negative intervals indicate loss of the certified visibility margin near the known-free/unknown boundary, explaining why hidden obstacles can be detected too late for the positional CBF controller to avoid collision.

\textit{Unknown-region overlap violations:} 
The unknown-region overlap violation metric measures empirical exposure to locally unknown space before it has been sufficiently observed. The velocity-tracking baseline yields zero unknown-region overlap violations because it keeps the sensor aligned with the direction of motion, but this conservative behavior reduces information gain and prevents timely completion. The constant yaw-rate and VP baselines can explore efficiently in some settings, but their unknown-region overlap violations and collision rates indicate repeated exposure to hidden obstacles. \texttt{SEAMLiS} keeps the unknown-region overlap violation count low while completing all reported trials without collisions, which supports the intended behavior of the attitude \texttt{gatekeeper}: nominal visibility maximization is used when certified safe, and motion-aligned backup sensing is invoked near critical unknown boundaries. The remaining nonzero counts arise from discrete simulation updates and the simplified sensed-map representation used for computational efficiency in simulation. The visualization in Fig.~\ref{fig:baseline_failures}(b) further emphasizes that a hidden obstacle detected by one robot is not necessarily available to another robot unless it has also been locally sensed.

\textit{Exploration efficiency:}
\texttt{SEAMLiS} retains much of the efficiency of the VP policy while eliminating the observed collisions. In single-robot cases, VP can be slightly faster because it never switches away from information-greedy yaw. In multi-robot cases, \texttt{SEAMLiS} remains comparable to VP in exploration time and is substantially more reliable, since the safety filter prevents late obstacle detections that would otherwise lead to collision. Compared with velocity tracking, \texttt{SEAMLiS} avoids the overly conservative behavior that prevents timely completion.

\subsection{Isaac Sim Validation}\label{subsec:isaac}
Representative Isaac Sim~\cite{nvidia_isaac_sim} rollouts are shown in Fig.~\ref{fig:isaac_results}. We ran Isaac Sim experiments for the same one-, two-, and three-robot settings as in the randomized benchmark, using the same task setup and controller hyperparameters. The examples in Fig.~\ref{fig:isaac_results} correspond to the three-robot frontier-based setting.

In the visualization, initially unknown obstacles are rendered in light orange. Local obstacle detections are indicated by robot-colored regions on each cylindrical obstacle: when robot $i$ detects an obstacle, its corresponding region is recolored using robot $i$'s color. This visualization is consistent with the decentralized setting, since detected obstacle information is not shared across robots.

The figure highlights the distinct behaviors of the compared methods. In Fig.~\ref{fig:isaac_results}(a), the constant yaw-rate baseline causes the red robot to collide with an obstacle even after detection, showing that the obstacle was not detected early enough to preserve a feasible avoidance maneuver. In Fig.~\ref{fig:isaac_results}(b), the visibility-promoting baseline causes the red robot to collide with an unknown obstacle that had not yet been detected. In contrast, Fig.~\ref{fig:isaac_results}(c) shows that \texttt{SEAMLiS} safely completes the exploration task and builds the map without any safety violation. These Isaac Sim results are consistent with the trends observed in the randomized simulation benchmark and further support the practical effectiveness of the proposed safety filters.

\section{HARDWARE EXPERIMENTS}
\label{sec:hardware}

\subsection{Experimental Setup}

We further validate the proposed safety layer on two Crazyflie quadrotors in an indoor flight workspace with Vicon localization. The workspace is approximately $3.35\,\mathrm{m} \times 3.50\,\mathrm{m}$, and both robots fly at a fixed height of $z=1.0\,\mathrm{m}$. The planar motion is controlled using the double-integrator model with decoupled yaw in~\eqref{eq:dyn}. The robot radius is set to $r_{\textup{rob}}=0.075\,\mathrm{m}$.

The environment contains one initially unknown circular obstacle centered at $(0,0)$ in the Vicon frame with radius $0.3\,\mathrm{m}$. To focus the hardware test on decentralized execution and safety-filter behavior, the limited-FoV sensing event is computed geometrically from the Vicon state rather than from an onboard camera. The obstacle is therefore withheld from each robot's local map until it lies within that robot's modeled sensing footprint. This setup preserves the key limited-sensing mechanism studied in the paper while allowing baseline collisions with a physical obstacle to be observed in a controlled experiment.

Each Crazyflie is connected to a laptop through ROS~2~\cite{macenski_robot_2022}, and each robot runs its own exploration and control stack in an independent ROS~2 node. The robots exchange only their states and frontier information; they do not share maps, obstacle lists, planned paths, or control inputs. Thus, all goal assignment, path planning, sensing updates, and safety-filter computations are performed separately for each robot. The global goal assignment uses frontier-based exploration, and A* is used to generate a waypoint path toward the selected frontier on each robot's local map.

The hardware parameters are chosen as a workspace-scaled instantiation of the limited-sensing setting. We use $v_{\max}=1.35\,\mathrm{m/s}$, $a_{\max}=1.5\,\mathrm{m/s^2}$, $\theta_{\textup{fov}}=70^\circ$, and $l_{\textup{range}}=0.25\,\mathrm{m}$. The translational controller is the same MPC-CBF controller used in simulation, with horizon $10$ and class-$\calK$ gains $\alpha_0=\alpha_1=0.55$. For \texttt{SEAMLiS}, the attitude \texttt{gatekeeper} uses the visibility-promoting yaw policy as the nominal controller and velocity tracking as the backup controller, with nominal horizon $0.4\,\mathrm{s}$ and backup horizon $1.8\,\mathrm{s}$. We compare the constant yaw-rate baseline and \texttt{SEAMLiS}.

\subsection{Results}

Figure~\ref{fig:hardware_results} shows representative hardware executions for the constant yaw-rate baseline and \texttt{SEAMLiS}. Under constant yaw-rate, one robot collides with the initially unknown obstacle. The collision occurs even though the other robot has already locally sensed the obstacle, because obstacle information is not shared as a full map in the decentralized setting. Consequently, the obstacle is absent from the colliding robot's local CBF constraint set until that robot detects it with its own limited-FoV sensor model. The constant yaw-rate policy does not ensure that the direction of motion remains sufficiently visible, so the obstacle can be detected too late for the positional safety controller to avoid collision.

In contrast, \texttt{SEAMLiS} avoids the obstacle in the same hardware setting. When the robot approaches the boundary between known-free and unknown space, the attitude \texttt{gatekeeper} filters the nominal yaw command and invokes the velocity-tracking backup policy when needed. This keeps the motion-relevant unknown boundary visible early enough for the MPC-CBF controller to react after detection. The hardware result is consistent with the simulation and Isaac Sim trends: constant yaw-rate can fail under limited sensing and decentralized map information, whereas \texttt{SEAMLiS} preserves sufficient visibility for safe execution.

\section{CONCLUSION}

This paper presented \texttt{SEAMLiS}, a modular execution-layer safety framework for decentralized multi-robot exploration under limited sensing. The central idea is to separate exploration efficiency from safety enforcement: upstream goal assignment and planning modules can remain optimistic and information-driven, while \texttt{SEAMLiS} ensures that the robot observes critical unknown boundaries early enough to preserve feasible collision avoidance. The proposed attitude \texttt{gatekeeper} switches between visibility-promoting and velocity-tracking yaw policies, and the positional \ac{CBF} controller avoids known, newly detected, and inter-agent obstacles. Under stated sensing and feasibility conditions, the combined controller guarantees collision avoidance. 
Simulation results with frontier-based exploration and a decentralized CoScan allocator show that \texttt{SEAMLiS} achieves collision-free exploration across all tested team sizes while maintaining exploration times comparable to efficiency-oriented baselines. Hardware experiments with two Crazyflie quadrotors further show the same safety trend in physical flight: the constant yaw-rate baseline collides with an initially unknown obstacle, whereas \texttt{SEAMLiS} avoids the collision by preserving the visibility constraint.









\bibliographystyle{IEEEtran}
\typeout{}
\bibliography{references.bib}

\end{document}